\documentclass{article}

\usepackage[preprint]{neurips_2026}

\usepackage[utf8]{inputenc}
\usepackage[T1]{fontenc}
\usepackage{hyperref}
\usepackage{url}
\usepackage{booktabs}
\usepackage{nicefrac}
\usepackage{microtype}
\usepackage[table]{xcolor}

\usepackage{amsmath,amssymb,amsfonts,amsthm,mathtools,bm}

\usepackage{graphicx}
\usepackage{multirow}

\usepackage{enumitem}

\hypersetup{
    colorlinks=false,
    pdfborder={0 0 1},
    linkbordercolor={0 0 1},
    citebordercolor={0 1 0},
    urlbordercolor={0 0 1}
}
\theoremstyle{plain}
\newtheorem{theorem}{Theorem}
\newtheorem{proposition}{Proposition}
\newtheorem{lemma}{Lemma}
\newtheorem{corollary}{Corollary}

\theoremstyle{definition}

\newcommand{\argmax}{\operatorname{argmax}}
\newcommand{\argmin}{\operatorname{argmin}}

\title{Exact Dual Geometry of SOC-ICNN Value Functions}

\author{%
  Kang Liu\\
  School of Future Technology\\
  Xi'an Jiaotong University\\
  Xi'an, China\\
  \texttt{kanyo@foxmail.com}\\
  \And
  Jianchen Hu\\
  School of Future Technology\\
  School of Automation Science and Engineering\\
  Xi'an Jiaotong University\\
  Xi'an, China\\
  \texttt{horace89@gmail.com}\\
  \And
  Wei Peng \\
  School of Automation Science and Engineering \\
  Xi'an Jiaotong University \\
  Xi'an, China \\
  \texttt{weipeng@stu.xjtu.edu.cn}\\
}
\begin{document}

\maketitle

\begin{abstract}
Input Convex Neural Networks (ICNNs) are commonly used in a two-stage manner: one first trains a convex network and then minimizes it over its input in a downstream inference problem. Recent second-order-cone ICNNs (SOC-ICNNs) enrich ReLU-based ICNNs with quadratic and conic modules and admit an exact representation as value functions of second-order cone programs (SOCPs). This value-function structure enables an explicit convex-analytic treatment of SOC-ICNN inference. In this paper, we study the exact first-order and local second-order geometry of SOC-ICNNs from the dual viewpoint. We show that supporting slopes, subdifferentials, directional derivatives, and local Hessians can be recovered directly from optimal dual variables. These results provide the geometric primitives for white-box SOC-ICNN inference, going beyond black-box automatic differentiation. Numerical experiments validate the exact multiplier readout, the local Hessian formula, and the set-valued behavior at structurally degenerate inputs. We also provide a step-by-step tutorial showing how the readout mechanism instantiates a complete white-box inference loop. The code is available at \url{https://anonymous.4open.science/r/SOC-ICNN-Theory-BEFC/}.
\end{abstract}

\section{Introduction}
\label{sec:intro}

Input Convex Neural Networks (ICNNs) are neural architectures designed to be convex with respect to designated inputs \cite{amos2017icnn}. 
This convexity enables a two-stage use pattern: a parameterized network \(f_\theta\) is first trained to approximate a target objective or value function, and is then minimized over its input in a downstream inference problem. 
A general task of this form is
\begin{equation}
x^\star(y)\in \argmin_x \bigl\{ f_\theta(x)+\Phi(x;y)\bigr\},
\label{eq:intro_inference_problem}
\end{equation}
where \(\Phi(x;y)\) is a task-specific convex penalty or constraint \cite{makkuva2020oticnn,lawrynczuk2022icnnmpc}. 
Solving \eqref{eq:intro_inference_problem} requires more than function evaluation: descent-based inference depends on first-order objects such as subgradients and directional derivatives, while Newton-type methods require local curvature whenever a smooth branch is well defined. 
Automatic differentiation can return a selected derivative for nonsmooth programs, but it does not reveal the full variational structure and may produce implementation-dependent selections at nonsmooth boundaries \cite{bolte2021conservative,lee2020correctness}.

To make this geometry analytically tractable, we focus on second-order-cone ICNNs (SOC-ICNNs) \cite{LiuHu2026SOCICNN}. 
Classical ReLU-ICNNs can be interpreted as value functions of parametric linear programs (LPs), whereas SOC-ICNNs lift this structure to a structured second-order cone program (SOCP) value-function representation by augmenting the depth-\(L\) ReLU backbone with quadratic and norm-based modules:
\begin{equation}
f_{\mathrm{SOC}}(x)
=
f_{\mathrm{ReLU}}(x)
+\sum_{h=1}^H \frac{\alpha_h}{2}\|B_hx+e_h\|_2^2
+\sum_{g=1}^G \lambda_g \|A_gx+d_g\|_2,
\label{eq:intro_soc_icnn_def}
\end{equation}
where \(\alpha_h>0\) and \(\lambda_g\ge 0\). 
This exact optimization-based representation provides a route to analyze \(f_{\mathrm{SOC}}\) through the dual variables of its value-function formulation. 

While \cite{LiuHu2026SOCICNN} establishes the primal SOC value-function representation, it does not characterize the geometric properties. In this paper, we investigate the exact first-order and local second-order geometry of \eqref{eq:intro_soc_icnn_def} from the dual viewpoint. 
Once the network is written as a convex value function, its supporting hyperplanes, subgradients, directional derivatives, and local curvature can be recovered from optimal dual solutions. 
Based on the SOCP representation, we address three questions:
\textit{(i) how to extract first-order information directly from dual solutions;
(ii) how structural degeneracy affects the set-valued first-order landscape;
(iii) how to characterize local second-order curvature on nondegenerate regions.}

Our analysis proceeds in three steps:
\begin{itemize}[leftmargin=*]
    \item \textbf{Exact multiplier readout.} 
    We derive a unified dual representation for the mixed polyhedral--quadratic--conic architecture and construct a structured readout map that converts optimal dual solutions into affine supporting slopes.

    \item \textbf{Subdifferential exactness and degeneracy.} 
    We prove that the full subdifferential and directional derivative are exactly characterized by the readout image of the optimal-dual set. 
    This also identifies zero ReLU preactivations and zero conic residuals as the structural sources of first-order nonuniqueness.

    \item \textbf{Local affine--curvature decomposition.} 
    We establish outer semicontinuity of the optimal-dual map and show that, on nondegenerate neighborhoods, the network admits an explicit affine--curvature decomposition with closed-form gradient and Hessian formulas.
\end{itemize}

Taken together, these results show that the geometric information required by \eqref{eq:intro_inference_problem} can be recovered explicitly from the dual structure of a SOC-ICNN, rather than only through black-box automatic differentiation. 
The remainder of the paper is organized as follows. 
Section~\ref{sec:related_work} reviews related work. 
Section~\ref{sec:preliminaries} introduces the model and its value-function representation. 
Sections~\ref{sec:structured_dual_readout}--\ref{sec:local_curvature_transition} develop the dual geometry and its first- and second-order consequences. 
Section~\ref{sec:numerical_experiments} provides numerical validation and includes a compact white-box inference tutorial, with detailed steps deferred to the appendix.

\section{Related Work}
\label{sec:related_work}

\subsection{Convex networks and optimization-defined layers}

ICNNs introduced architectural constraints that ensure convexity with respect to designated inputs, enabling optimization-based inference over network inputs \cite{amos2017icnn}. 
This idea is closely related to optimization-defined neural layers. 
OptNet embeds a quadratic program as a differentiable layer inside a neural network \cite{amos2017optnet}. 
Differentiable convex optimization layers extend this idea to disciplined parametrized convex programs \cite{agrawal2019dco}, while deep declarative nodes provide a broader framework for learning with implicitly defined optimization modules \cite{gould2019ddn}. 

These optimization-based architectures have been used in several downstream settings. 
Differentiable MPC studies end-to-end planning and control through differentiable control layers \cite{amos2018dmpc}. 
Learning convex optimization control policies uses convex programs to parameterize control policies \cite{agrawal2020cocp}, and ICNN-based building MPC demonstrates the use of convex neural models in predictive control \cite{bunning2021buildingmpc}. 
ICNNs have also been used to optimize functionals over probability spaces \cite{alvarez2021jkoicnn}. 
The main focus of these works is typically solver differentiation or end-to-end training through optimization-defined modules. 
In contrast, we study a structured convex network that itself admits an exact value-function representation, and we characterize its geometry as a convex function of the input.

\subsection{Neural geometry, nonsmooth analysis, and value-function sensitivity}

Neural-network geometry has been studied extensively for piecewise-linear models. 
Early work analyzed the number of linear regions induced by deep networks \cite{montufar2014linearregions}, while later work studied expressive power and region complexity from related viewpoints \cite{raghu2017expressive,hanin2019linearregions}. 
Spline interpretations further show that ReLU networks can be understood as piecewise-affine spline operators \cite{balestriero2018spline}. 
Power-diagram and broader geometric viewpoints provide complementary descriptions of how deep networks partition the input space \cite{balestriero2019powerdiagram,balestriero2024geometry}. 
From a nonsmooth optimization perspective, Tian and So analyzed stationarity and hardness for generic ReLU networks \cite{tian2023stationarity}. 
These works reveal important geometric properties of neural models, but they do not use a convex value-function representation to derive architecture-specific dual readouts of subgradients and curvature.

Our analysis is also connected to value-function sensitivity theory. 
Danskin's theorem characterizes subgradients of max-type value functions through active optimizers \cite{danskin1966}. 
More general tools for perturbation, stability, and variational analysis are developed in Rockafellar and Wets \cite{rockafellar1998variational}. 
Bonnans and Shapiro provide a complementary treatment of perturbation analysis for optimization problems \cite{bonnans2000perturbation}. 
These results are abstract and do not directly yield computable formulas for a composed convex neural architecture. 
By exploiting the polyhedral--quadratic--conic structure of SOC-ICNNs, we derive explicit multiplier readouts, exact first-order geometry, and local affine--curvature formulas. 
Thus, our contribution is primarily a dual-geometric characterization of SOC-ICNN value functions, with white-box inference as a direct consequence.

\section{Preliminaries}
\label{sec:preliminaries}

We study the SOC-ICNN in \eqref{eq:intro_soc_icnn_def} and the inference problem in \eqref{eq:intro_inference_problem}. 
The input is \(x\in\mathbb R^{d_0}\), and the ReLU backbone has hidden activations \(z_\ell(x)\in\mathbb R^{d_\ell}\) for layers \(\ell\in[L]:=\{1,\dots,L\}\). 
Its parameters are \(W_\ell\in\mathbb R^{d_\ell\times d_0}\), \(U_\ell\in\mathbb R^{d_\ell\times d_{\ell-1}}\), \(b_\ell\in\mathbb R^{d_\ell}\), output weight \(c\in\mathbb R^{d_L}\), affine term \(v\in\mathbb R^{d_0}\), and scalar bias \(b_0\in\mathbb R\). 
The quadratic modules are indexed by \(h\in[H]\), with \(B_h\in\mathbb R^{m_h\times d_0}\), \(e_h\in\mathbb R^{m_h}\), and weight \(\alpha_h>0\); the conic modules are indexed by \(g\in[G]\), with \(A_g\in\mathbb R^{k_g\times d_0}\), \(d_g\in\mathbb R^{k_g}\), and weight \(\lambda_g\ge0\). 
Throughout, \(\sigma(t)=\max\{t,0\}\) denotes the elementwise ReLU, inequalities such as \(U_\ell\ge0\) and \(c\ge0\) are interpreted elementwise, \(\|\cdot\|_2\) is the Euclidean norm, \(\partial f(x)\) is the convex subdifferential, and \(\mathbb B(\bar x,\rho):=\{x\in\mathbb R^{d_0}:\|x-\bar x\|_2<\rho\}\). 
The dual variables \(\nu=\{\nu_\ell\}_{\ell=1}^L\), \(p=\{p_h\}_{h=1}^H\), and \(r=\{r_g\}_{g=1}^G\) correspond to the ReLU, quadratic, and conic blocks.

\subsection{Model setup}

The ReLU backbone is
\[
z_\ell(x)=\sigma(W_\ell x+U_\ell z_{\ell-1}(x)+b_\ell),
\qquad \ell\in[L],
\qquad z_0(x)\equiv0,
\]
with output \(f_{\mathrm{ReLU}}(x)=c^\top z_L(x)+v^\top x+b_0\). 
As in standard ICNNs, \(U_\ell\ge0\) and \(c\ge0\), which ensures convexity in \(x\).

The full SOC-ICNN is \eqref{eq:intro_soc_icnn_def}. 
For later use, we write
\[
q_h(x):=B_hx+e_h,
\qquad
u_g(x):=A_gx+d_g
\]
for the quadratic and conic residuals.

\subsection{Value-function representations}

We recall from \cite{LiuHu2026SOCICNN} that the ReLU backbone and the full SOC-ICNN admit exact LP and SOCP value-function representations. 
For the ReLU part,
\begin{equation}
\label{eq:ps_relu_lp}
f_{\mathrm{ReLU}}(x)
=
\min_{\{z_\ell\}}
\left\{
c^\top z_L+v^\top x+b_0
\;:\;
z_\ell\ge W_\ell x+U_\ell z_{\ell-1}+b_\ell,\ 
z_\ell\ge0,\ 
\ell\in[L]
\right\}.
\end{equation}
For fixed \(x\), the minimizer coincides with the forward ReLU activations.

For the full model, let \(\mathcal C_{\mathrm{SOC}}(x)\) collect all \((z,s,\eta,t,\omega)\) satisfying \(z_\ell\ge W_\ell x+U_\ell z_{\ell-1}+b_\ell\), \(z_\ell\ge0\), \(\eta_h=B_hx+e_h\), \((s_h,1,\eta_h)\in\mathcal Q_r^{m_h+2}\), \(\omega_g=A_gx+d_g\), and \(\|\omega_g\|_2\le t_g\), for all valid \(\ell,h,g\). 
Then
\begin{equation}
\label{eq:ps_soc_socp}
f_{\mathrm{SOC}}(x)
=
\min_{(z,s,\eta,t,\omega)\in\mathcal C_{\mathrm{SOC}}(x)}
c^\top z_L+v^\top x+b_0
+\sum_{h=1}^H\alpha_h s_h
+\sum_{g=1}^G\lambda_g t_g .
\end{equation}
Here \(\mathcal Q_r^k:=\{(t,s,\xi)\in\mathbb R\times\mathbb R\times\mathbb R^{k-2}:\|\xi\|_2^2\le2ts,\ t\ge0,\ s\ge0\}\) is the rotated second-order cone, so \((s_h,1,\eta_h)\in\mathcal Q_r^{m_h+2}\) is equivalent to \(\frac12\|\eta_h\|_2^2\le s_h\). 
Representation \eqref{eq:ps_soc_socp} makes the SOC-ICNN forward pass an explicit convex value function, whose dual formulation will be used to characterize its first-order and local second-order geometry.

\section{Dual Geometry: A Blockwise Perspective}
\label{sec:structured_dual_readout}

Having established the SOCP value-function representation, we now analyze its dual form. 
The goal is to express the SOC-ICNN as a pointwise maximum of affine supports and to recover their slopes from optimal dual variables. 
This dual viewpoint provides the basic objects used in the first-order and local second-order analysis developed in the next sections.

\subsection{A unified dual template}

For the ReLU block, the dual variables satisfy the recursive box constraints
\begin{equation}
\mathcal K_\nu
:=
\left\{
\nu=\{\nu_\ell\}_{\ell=1}^L
\ \middle|\
0\le \nu_L\le c,\
0\le \nu_\ell\le U_{\ell+1}^\top\nu_{\ell+1},\ 
\ell=1,\dots,L-1
\right\}.
\label{eq:sdr_Knu}
\end{equation}
The quadratic block contributes unconstrained Fenchel dual variables, and we write
\[
\mathcal P:=\prod_{h=1}^H\mathbb R^{m_h}.
\]
For the conic block, the dual variables lie in Euclidean balls:
\begin{equation}
\mathcal K_r
:=
\prod_{g=1}^G
\left\{
r_g\in\mathbb R^{k_g}:\|r_g\|_2\le \lambda_g
\right\}.
\label{eq:sdr_Kr}
\end{equation}
Combining these three blocks gives the following dual representation.

\begin{theorem}[Structured dual readout]
\label{thm:sdr_dual_readout}
For every input \(x\in\mathbb R^{d_0}\),
\begin{equation}
f_{\mathrm{SOC}}(x)
=
\max_{\nu\in\mathcal K_\nu,\;p\in\mathcal P,\;r\in\mathcal K_r}
\Psi(x;\nu,p,r),
\label{eq:sdr_unified_dual}
\end{equation}
where
\begin{equation}
\Psi(x;\nu,p,r)
=
v^\top x+b_0
+\sum_{\ell=1}^L \nu_\ell^\top(W_\ell x+b_\ell)
+\sum_{h=1}^H
\left[
p_h^\top q_h(x)-\frac{1}{2\alpha_h}\|p_h\|_2^2
\right]
+\sum_{g=1}^G r_g^\top u_g(x).
\label{eq:sdr_Psi}
\end{equation}
For \(\xi=(\nu,p,r)\), define the readout map
\begin{equation}
\mathcal G(\xi)
=
v+\sum_{\ell=1}^L W_\ell^\top \nu_\ell
+\sum_{h=1}^H B_h^\top p_h
+\sum_{g=1}^G A_g^\top r_g,
\label{eq:sdr_readout}
\end{equation}
and the optimal-dual set
\begin{equation}
\mathcal D^\star(x)
:=
\argmax_{\nu\in\mathcal K_\nu,\;p\in\mathcal P,\;r\in\mathcal K_r}
\Psi(x;\nu,p,r).
\label{eq:sdr_Dstar}
\end{equation}
Then, for every \(\xi\in\mathcal D^\star(x)\),
\begin{equation}
f_{\mathrm{SOC}}(x)=\Psi(x;\xi),
\qquad
f_{\mathrm{SOC}}(x')\ge \Psi(x';\xi)
\quad \forall x'\in\mathbb R^{d_0}.
\label{eq:sdr_support_property}
\end{equation}
Hence \(\Psi(\cdot;\xi)\) is an affine support of \(f_{\mathrm{SOC}}\) at \(x\), and \(\mathcal G(\xi)\) is its slope.
\end{theorem}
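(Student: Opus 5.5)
The plan is to exploit the additive structure of \eqref{eq:intro_soc_icnn_def}: the feasible set of $\Psi$ is a product $\mathcal K_\nu\times\mathcal P\times\mathcal K_r$, and $\Psi$ splits into a ReLU block, $H$ quadratic blocks and $G$ conic blocks. So the maximum in \eqref{eq:sdr_unified_dual} is the sum of the blockwise maxima. I will verify three identities.

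\[
\max_{\nu\in\mathcal K_\nu}\Bigl\{v^\top x+b_0+\sum_{\ell}\nu_\ell^\top(W_\ell x+b_\ell)\Bigr\}=f_{\mathrm{ReLU}}(x),
\]
\[
\max_{p_h}\Bigl\{p_h^\top q_h-\tfrac{1}{2\alpha_h}\|p_h\|_2^2\Bigr\}=\tfrac{\alpha_h}{2}\|q_h\|_2^2,
\qquad
\max_{\|r_g\|_2\le\lambda_g}r_g^\top u_g=\lambda_g\|u_g\|_2 .
\]

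The quadratic identity is the Fenchel conjugate of $\tfrac{\alpha}{2}\|\cdot\|^2$, attained at $p_h=\alpha_h q_h(x)$. The conic identity is Cauchy--Schwarz, attained at $r_g=\lambda_g u_g/\|u_g\|_2$, or at any $r_g$ in the ball when $u_g=0$. In particular every maximum is attained, so $\mathcal D^\star(x)\neq\emptyset$.

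\textbf{ReLU block (the main obstacle).} Here I would use LP duality on \eqref{eq:ps_relu_lp}. Introduce multipliers $\nu_\ell\ge0$ for $z_\ell\ge W_\ell x+U_\ell z_{\ell-1}+b_\ell$ and $\mu_\ell\ge0$ for $z_\ell\ge0$. Stationarity in $z_L$ gives $c=\nu_L+\mu_L$. Stationarity in $z_\ell$ for $\ell<L$ gives $U_{\ell+1}^\top\nu_{\ell+1}=\nu_\ell+\mu_\ell$. Eliminating $\mu$ gives exactly the box constraints of $\mathcal K_\nu$ in \eqref{eq:sdr_Knu}. The dual objective is then $v^\top x+b_0+\sum_\ell\nu_\ell^\top(W_\ell x+b_\ell)$; the $z_0\equiv0$ term drops out.

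The primal LP is feasible: take $z_\ell$ large. It is also bounded below, since $c\ge0$ and $z_L\ge0$. Strong LP duality therefore gives equality with attainment.

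As a more self-contained alternative, I could argue weak duality directly. Multiply the constraint inequalities by $\nu_\ell$ and telescope using $\nu_\ell\le U_{\ell+1}^\top\nu_{\ell+1}$, $U\ge0$ and $z\ge0$; this yields $\Psi\le f$. Then exhibit an attaining $\nu$ by backward recursion: take $\nu_L=c\odot\mathbf 1[\text{pre}_L>0]$ and $\nu_\ell=(U_{\ell+1}^\top\nu_{\ell+1})\odot\mathbf 1[\text{pre}_\ell>0]$, and check that complementary slackness holds at the forward activations. I would use the LP duality route and mention this construction as a check.

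\textbf{Conclusion.} Summing the three identities gives \eqref{eq:sdr_unified_dual}. For \eqref{eq:sdr_support_property}, take $\xi\in\mathcal D^\star(x)$. Then $\Psi(x;\xi)=f_{\mathrm{SOC}}(x)$ by definition. For any $x'$, we have $\Psi(x';\xi)\le\max_{\xi'}\Psi(x';\xi')=f_{\mathrm{SOC}}(x')$, because the feasible set does not depend on $x$.

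Finally, $\Psi(\cdot;\xi)$ is affine in $x$. Collecting the $x$-coefficients, $v+\sum_\ell W_\ell^\top\nu_\ell+\sum_h B_h^\top p_h+\sum_g A_g^\top r_g=\mathcal G(\xi)$. Hence $\mathcal G(\xi)$ is the slope of an affine minorant that is tight at $x$, i.e.\ a subgradient.
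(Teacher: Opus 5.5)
Your proposal is correct and follows essentially the same route as the paper: LP duality for the ReLU block (eliminating $\mu$ to obtain $\mathcal K_\nu$), Fenchel conjugacy for the quadratic terms, the support function of the ball for the conic terms, and feasibility of $\xi$ at every $x'$ for the support property. Your explicit justification of strong duality and attainment (primal feasibility, plus the lower bound from $c\ge0$ and $z_L\ge0$) fills in details the paper leaves implicit.
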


The proof is deferred to Appendix~\ref{app:sec:sdr}.

Theorem~\ref{thm:sdr_dual_readout} identifies the SOC-ICNN as a supremum of affine supports whose slopes are obtained by the linear readout \(\mathcal G\). 
Thus the local geometry of \(f_{\mathrm{SOC}}\) can be studied through the optimal-dual set \(\mathcal D^\star(x)\). 
We next make this set explicit by separating the ReLU, quadratic, and conic blocks.

\subsection{Blockwise optimal set}

The unified dual representation gives a global description, but the behavior of the optimal multipliers is block-specific. 
For the ReLU backbone, let
\[
a_\ell(x):=W_\ell x+U_\ell z_{\ell-1}(x)+b_\ell,\qquad \ell\in[L],
\]
denote the preactivation. 
Recall that \(q_h(x)=B_hx+e_h\) and \(u_g(x)=A_gx+d_g\) denote the quadratic and conic residuals.

\begin{proposition}[Blockwise optimal set and canonical selector]
\label{prop:sdr_blockwise_selector}
For every input \(x\), the optimal-dual set in \eqref{eq:sdr_Dstar} factorizes as
\begin{equation}
\mathcal D^\star(x)
=
\mathcal D^\star_{\mathrm{ReLU}}(x)
\times
\prod_{h=1}^H \{\hat p_h(x)\}
\times
\prod_{g=1}^G \mathcal R_g^\star(x),
\label{eq:sdr_product_structure}
\end{equation}
where the quadratic block has the unique optimizer
\begin{equation}
\hat p_h(x)=\alpha_h q_h(x)=\alpha_h(B_hx+e_h),
\label{eq:sdr_quad_closed_form}
\end{equation}
and the conic block has the optimal set
\begin{equation}
\mathcal R_g^\star(x)
=
\begin{cases}
\left\{
\lambda_g \dfrac{u_g(x)}{\|u_g(x)\|_2}
\right\}, & u_g(x)\neq 0,\\[3mm]
\left\{
r_g\in\mathbb R^{k_g}:\|r_g\|_2\le \lambda_g
\right\}, & u_g(x)=0.
\end{cases}
\label{eq:sdr_conic_optset}
\end{equation}
A canonical ReLU selector is given by the backward recursion
\begin{equation}
\hat\nu_L(x)
= c\odot \mathbf 1_{\{a_L(x)>0\}},
\qquad
\hat\nu_\ell(x)
=
\bigl(U_{\ell+1}^\top \hat\nu_{\ell+1}(x)\bigr)
\odot \mathbf 1_{\{a_\ell(x)>0\}},
\quad \ell=L-1,\dots,1.
\label{eq:sdr_relu_canonical}
\end{equation}
A canonical conic selector is
\begin{equation}
\hat r_g(x)
=
\begin{cases}
\lambda_g \dfrac{u_g(x)}{\|u_g(x)\|_2}, & u_g(x)\neq 0,\\[3mm]
0, & u_g(x)=0.
\end{cases}
\label{eq:sdr_conic_canonical}
\end{equation}
Consequently,
\begin{equation}
\hat\xi(x):=(\hat\nu(x),\hat p(x),\hat r(x))\in\mathcal D^\star(x).
\label{eq:sdr_canonical_xi}
\end{equation}
Moreover, the only possible sources of nonuniqueness in \(\mathcal D^\star(x)\) are zero ReLU preactivations \(a_{\ell,i}(x)=0\) and zero conic residuals \(u_g(x)=0\).
\end{proposition}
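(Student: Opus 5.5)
The plan rests on one observation: the dual objective \(\Psi(x;\nu,p,r)\) in \eqref{eq:sdr_Psi} is additively separable across the three blocks. The first block is \((v^\top x+b_0)+\sum_\ell \nu_\ell^\top(W_\ell x+b_\ell)\), which depends on \(\nu\) only. The second is \(\sum_h[p_h^\top q_h(x)-\tfrac1{2\alpha_h}\|p_h\|_2^2]\), which depends on \(p\) only. The third is \(\sum_g r_g^\top u_g(x)\), which depends on \(r\) only. The feasible set \(\mathcal K_\nu\times\mathcal P\times\mathcal K_r\) is also a product, and within \(\mathcal P\) and \(\mathcal K_r\) each factor is itself a product over \(h\) and \(g\). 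Hence the argmax set in \eqref{eq:sdr_Dstar} is the product of the blockwise argmax sets. This gives \eqref{eq:sdr_product_structure} once each factor is identified, with \(\mathcal D^\star_{\mathrm{ReLU}}(x):=\argmax_{\nu\in\mathcal K_\nu}\sum_\ell\nu_\ell^\top(W_\ell x+b_\ell)\). By Theorem~\ref{thm:sdr_dual_readout}, this set is nonempty.

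For the quadratic factor, the objective is strictly concave in \(p_h\). Setting the gradient \(q_h(x)-p_h/\alpha_h\) to zero gives the unique maximizer \(\hat p_h=\alpha_h q_h(x)\), with value \(\tfrac{\alpha_h}{2}\|q_h(x)\|_2^2\).

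For the conic factor, Cauchy--Schwarz gives \(r_g^\top u_g\le\lambda_g\|u_g\|_2\). When \(u_g\ne0\) (and \(\lambda_g>0\)), equality on the ball holds exactly at \(r_g=\lambda_g u_g/\|u_g\|_2\). When \(u_g=0\), the objective vanishes identically, so the whole ball is optimal. This proves \eqref{eq:sdr_conic_optset}, and \(\hat r_g\) clearly selects from it. The degenerate case \(\lambda_g=0\) makes the ball \(\{0\}\), which is consistent with both formulas.

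The main work is the ReLU block: showing \(\hat\nu(x)\in\mathcal D^\star_{\mathrm{ReLU}}(x)\) and locating the sources of nonuniqueness. Feasibility is immediate from \eqref{eq:sdr_relu_canonical}. We have \(0\le\hat\nu_L\le c\), and \(0\le\hat\nu_\ell\le U_{\ell+1}^\top\hat\nu_{\ell+1}\), because the indicator lies in \(\{0,1\}\) and \(U_{\ell+1}^\top\hat\nu_{\ell+1}\ge0\) by \(U\ge0\) and induction.

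For optimality, I would use a summation-by-parts identity. Set \(\nu_{L+1}\)-term \(:=c\). For any feasible \(\nu\), using \(W_\ell x+b_\ell=a_\ell-U_\ell z_{\ell-1}\) with \(z_\ell=\sigma(a_\ell)\) and telescoping,
\[
\sum_{\ell}\nu_\ell^\top(W_\ell x+b_\ell)=c^\top z_L-\sum_{\ell}\Bigl[(\mu_\ell-\nu_\ell)^\top z_\ell+\nu_\ell^\top(z_\ell-a_\ell)\Bigr],
\]
where \(\mu_L:=c\) and \(\mu_\ell:=U_{\ell+1}^\top\nu_{\ell+1}\). I would verify this identity carefully, since it is the crux. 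Each bracketed term is nonnegative, because \(\mu_\ell-\nu_\ell\ge0\), \(z_\ell\ge0\), \(\nu_\ell\ge0\), and \(z_\ell\ge a_\ell\). So every feasible value is at most \(c^\top z_L\), which equals \(f_{\mathrm{ReLU}}(x)-v^\top x-b_0\).

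Equality holds if and only if two complementarity conditions hold coordinatewise. The first is \(\nu_{\ell,i}=\mu_{\ell,i}\) whenever \(z_{\ell,i}>0\), i.e.\ whenever \(a_{\ell,i}>0\). The second is \(\nu_{\ell,i}=0\) whenever \(z_{\ell,i}>a_{\ell,i}\), i.e.\ whenever \(a_{\ell,i}<0\). The canonical selector satisfies both, so it attains the bound and is optimal.

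The same characterization gives the nonuniqueness claim. Processing layers from \(L\) down to \(1\), on the set where every \(a_{\ell,i}\ne0\) the two conditions force \(\nu_{\ell,i}\) uniquely given \(\nu_{\ell+1}\). Freedom in \(\nu_{\ell,i}\), namely any value in \([0,\mu_{\ell,i}]\), arises only when \(a_{\ell,i}=0\). That choice can then propagate to lower layers through \(\mu_{\ell-1}\).

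Combining this with the uniqueness of \(\hat p_h\) and the case split for \(\mathcal R_g^\star\) shows that nonuniqueness in \(\mathcal D^\star(x)\) can occur only at zero preactivations or zero conic residuals. It also shows \(\hat\xi(x)\in\mathcal D^\star(x)\), which is \eqref{eq:sdr_canonical_xi}.
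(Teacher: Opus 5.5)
Your proof is correct. Its skeleton matches the paper's: separability of $\Psi$ over the product feasible set, closed-form optimizers for the quadratic and conic blocks, and a complementarity characterization of the ReLU block. The difference lies in how that characterization is obtained.

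The paper isolates it as Lemma~\ref{lem:app_sdr_relu_optset} and proves it by invoking LP strong duality and the KKT conditions. That argument takes as given, from the cited prior work, that the forward activations $z_\ell(x)$ solve the primal LP \eqref{eq:ps_relu_lp}. You instead write the gap $c^\top z_L-\sum_\ell\nu_\ell^\top(W_\ell x+b_\ell)$ explicitly, via your telescoping identity (which checks out), as a sum of nonnegative complementarity products. This is a direct weak-duality certificate at the specific point $z(x)$. Exhibiting $\hat\nu(x)$ with zero gap then proves its optimality and identifies $\mathcal D^\star_{\mathrm{ReLU}}(x)$ exactly. It also makes your appeal to Theorem~\ref{thm:sdr_dual_readout} for nonemptiness unnecessary.

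Your route buys self-containment. It needs neither LP duality theory nor the primal optimality of the forward pass. As a by-product it re-derives the ReLU half of Theorem~\ref{thm:sdr_dual_readout}, namely that the dual maximum equals $f_{\mathrm{ReLU}}(x)-v^\top x-b_0$.

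The paper's route is shorter. It packages the characterization as a standalone lemma, which is reused in Corollary~\ref{cor:sdr_min_norm_selector} and Proposition~\ref{prop:efg_nondegenerate_readout}. Your equality conditions are exactly that lemma, so they can play the same downstream role. The only notational difference is that your $\mu_\ell$ is the upper bound $U_{\ell+1}^\top\nu_{\ell+1}$, whereas the paper's $\mu_\ell$ is the slack $U_{\ell+1}^\top\nu_{\ell+1}-\nu_\ell$.
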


The proof is deferred to Appendix~\ref{app:sec:sdr}.

Proposition~\ref{prop:sdr_blockwise_selector} localizes all possible dual nonuniqueness to two structural events: zero ReLU preactivations and zero conic residuals. 
The quadratic block is always single-valued. 
The canonical selector gives a concrete representative of the optimal-dual set; the next corollary shows that this representative is selected by a minimum-norm principle.

\begin{corollary}[Minimum-norm canonical branch]
\label{cor:sdr_min_norm_selector}
For every input \(x\in\mathbb R^{d_0}\), the canonical selector \(\hat\xi(x)\) is the unique minimum-norm element of the optimal-dual set:
\begin{equation}
\hat\xi(x)
=
\arg\min_{\xi\in\mathcal D^\star(x)} \|\xi\|_2.
\label{eq:sdr_min_norm_selector}
\end{equation}
\end{corollary}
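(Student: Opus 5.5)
The plan is to exploit the product structure \eqref{eq:sdr_product_structure} from Proposition~\ref{prop:sdr_blockwise_selector}. Since \(\|\xi\|_2^2=\sum_\ell\|\nu_\ell\|_2^2+\sum_h\|p_h\|_2^2+\sum_g\|r_g\|_2^2\) separates across the blocks, and \(\mathcal D^\star(x)\) is a Cartesian product of blockwise sets, minimizing \(\|\xi\|_2\) over \(\mathcal D^\star(x)\) splits into independent blockwise minimizations. Uniqueness of the global minimizer then follows from uniqueness in each block.

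The quadratic and conic blocks are immediate.
\begin{itemize}
\item Each quadratic factor is the singleton \(\{\hat p_h(x)\}\), so there is nothing to choose.
\item For the conic factor \(\mathcal R_g^\star(x)\) in \eqref{eq:sdr_conic_optset}, if \(u_g(x)\neq0\) the set is a singleton equal to \(\hat r_g(x)\).
\item If \(u_g(x)=0\), the set is the closed ball of radius \(\lambda_g\), whose unique minimum-norm element is \(0=\hat r_g(x)\).
\end{itemize}
This matches \eqref{eq:sdr_conic_canonical}.

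The substantive step is the ReLU block. I first make \(\mathcal D^\star_{\mathrm{ReLU}}(x)\) explicit. Using LP strong duality for \eqref{eq:ps_relu_lp} and the fact that the primal minimizer is \(z_\ell=\sigma(a_\ell(x))\), an element \(\nu\in\mathcal K_\nu\) is optimal iff it satisfies complementary slackness with this primal solution. There are two constraints per coordinate.
\begin{itemize}
\item The constraint \(z_{\ell,i}\ge a_{\ell,i}\) has multiplier \(\nu_{\ell,i}\).
\item The constraint \(z_{\ell,i}\ge0\) has multiplier \((U_{\ell+1}^\top\nu_{\ell+1})_i-\nu_{\ell,i}\), with \(c\) in place of \(U_{\ell+1}^\top\nu_{\ell+1}\) when \(\ell=L\).
\end{itemize}
Complementary slackness then forces the following, coordinate by coordinate.
\begin{itemize}
\item If \(a_{\ell,i}>0\), then \(\nu_{\ell,i}\) equals its upper bound.
\item If \(a_{\ell,i}<0\), then \(\nu_{\ell,i}=0\).
\item If \(a_{\ell,i}=0\), then \(\nu_{\ell,i}\) is free in \([0,\text{upper bound}]\).
\end{itemize}
This is presumably the characterization behind Proposition~\ref{prop:sdr_blockwise_selector}, which I would cite or re-derive.

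Next I prove a componentwise minimality claim: every \(\nu\in\mathcal D^\star_{\mathrm{ReLU}}(x)\) satisfies \(\nu_\ell\ge\hat\nu_\ell(x)\) componentwise. The argument is backward induction on \(\ell\).
\begin{itemize}
\item At \(\ell=L\), \(\hat\nu_{L,i}\) is \(c_i\) where \(a_{L,i}>0\), matching \(\nu_{L,i}\), and is \(0\le\nu_{L,i}\) otherwise.
\item At a general layer, where \(a_{\ell,i}>0\), nonnegativity of \(U_{\ell+1}\) and the induction hypothesis give \(\nu_{\ell,i}=(U_{\ell+1}^\top\nu_{\ell+1})_i\ge(U_{\ell+1}^\top\hat\nu_{\ell+1})_i=\hat\nu_{\ell,i}\).
\item Elsewhere \(\hat\nu_{\ell,i}=0\le\nu_{\ell,i}\).
\end{itemize}

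Since all entries are nonnegative, \(0\le\hat\nu\le\nu\) gives \(\|\hat\nu\|_2\le\|\nu\|_2\). Equality forces \(\sum(\nu_{\ell,i}^2-\hat\nu_{\ell,i}^2)=0\) with every term nonnegative, hence \(\nu=\hat\nu\). Because \(\hat\nu(x)\in\mathcal D^\star_{\mathrm{ReLU}}(x)\) by \eqref{eq:sdr_canonical_xi}, it is the unique minimum-norm element.

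I expect the main obstacle to be the coupling in the ReLU block. The admissible interval for \(\nu_\ell\) depends on \(\nu_{\ell+1}\), so a naive coordinatewise "pick the smallest" argument is not obviously valid. The componentwise-domination induction, which relies on \(U_{\ell+1}\ge0\) and \(c\ge0\), is what resolves it. A secondary point is justifying the exact complementary-slackness description of \(\mathcal D^\star_{\mathrm{ReLU}}(x)\), which rests on LP strong duality and the fact that the primal minimizer is the forward activation.
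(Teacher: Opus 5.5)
Your proposal is correct and follows the paper's proof essentially step for step. Both arguments split the minimum-norm problem blockwise using the product structure; both dispose of the quadratic block (a singleton) and the conic block (whose minimum-norm element is $0$ when $u_g(x)=0$). For the ReLU block, both use the complementary-slackness characterization (the paper's Lemma~\ref{lem:app_sdr_relu_optset}) and backward induction with $U_{\ell+1}\ge 0$ to show that $\hat\nu(x)$ is the elementwise minimum of $\mathcal D^\star_{\mathrm{ReLU}}(x)$, and nonnegativity of the multipliers then gives that it is the unique minimum-norm element.
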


The proof is deferred to Appendix~\ref{app:sec:sdr}.

The corresponding canonical readout is
\begin{equation}
g^{\mathrm{can}}(x):=\mathcal G(\hat\xi(x))
=
v+\sum_{\ell=1}^L W_\ell^\top \hat\nu_\ell(x)
+\sum_{h=1}^H \alpha_h B_h^\top q_h(x)
+\sum_{g=1}^G A_g^\top \hat r_g(x).
\label{eq:sdr_canonical_readout}
\end{equation}
This vector will serve as the canonical first-order representative in the next two sections.

\section{The Exact First-Order Geometry}
\label{sec:exact_first_order_geometry}

The blockwise dual structure developed above now allows us to characterize the first-order geometry of the SOC-ICNN value function. 
The optimal-dual set \(\mathcal D^\star(x)\) provides affine supporting slopes through the readout map \(\mathcal G\). 
We show in this section that these slopes recover the full convex subdifferential and the directional derivative.

\subsection{Exact readout of first-order geometry}

Since \(f_{\mathrm{SOC}}\) is a finite convex value function, its subdifferential can be characterized through the active maximizers in the dual representation. 
The next theorem gives the architecture-specific form of this sensitivity result.

\begin{theorem}[Exact first-order readout]
\label{thm:efg_exact_readout}
For every \(x\in\mathbb R^{d_0}\),
\begin{equation}
\partial f_{\mathrm{SOC}}(x)
=
\left\{
\mathcal G(\xi):\xi\in\mathcal D^\star(x)
\right\}.
\label{eq:efg_exact_subdiff}
\end{equation}
Moreover, for every direction \(d\in\mathbb R^{d_0}\),
\begin{equation}
f_{\mathrm{SOC}}'(x;d)
=
\max_{\xi\in\mathcal D^\star(x)}
\mathcal G(\xi)^\top d .
\label{eq:efg_directional_derivative}
\end{equation}
\end{theorem}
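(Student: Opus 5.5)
We plan to derive this as a Danskin-type result for the max-representation in Theorem~\ref{thm:sdr_dual_readout}, using the product structure of Proposition~\ref{prop:sdr_blockwise_selector} to get around the fact that the dual feasible set is not compact (the quadratic block $\mathcal P$ is unbounded). First, we reduce to a compact index set. For each $h$, the maximization over $p_h$ is a strictly concave quadratic whose maximizer is $\alpha_h q_h(x)$ and whose value is $\frac{\alpha_h}{2}\|q_h(x)\|_2^2$. We therefore split
\[
f_{\mathrm{SOC}}(x)=\max_{(\nu,r)\in\mathcal K_\nu\times\mathcal K_r}\Psi_0(x;\nu,r)+\sum_{h=1}^H\frac{\alpha_h}{2}\|q_h(x)\|_2^2 ,
\]
where $\Psi_0$ collects the ReLU, conic and affine terms of \eqref{eq:sdr_Psi}. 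Here $\Psi_0$ is affine in $x$ and continuous in $(\nu,r)$. The set $\mathcal K_\nu\times\mathcal K_r$ is compact, since $\mathcal K_\nu$ is bounded by the recursion $0\le\nu_\ell\le U_{\ell+1}^\top\nu_{\ell+1}$ with $\nu_L\le c$, and $\mathcal K_r$ is a product of balls. The quadratic part is smooth with gradient $\sum_h\alpha_h B_h^\top q_h(x)=\sum_h B_h^\top\hat p_h(x)$.

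Second, we apply the classical Danskin/Valadier theorem to the first term: a max of affine functions over a compact index set. Its subdifferential is $\operatorname{conv}\{\nabla_x\Psi_0(x;\nu,r)\}$, taken over the active set, and its directional derivative is the max of $\nabla_x\Psi_0^\top d$ over the active set. The active set is exactly the $(\nu,r)$-projection of $\mathcal D^\star(x)$, by the factorization \eqref{eq:sdr_product_structure}. Adding the smooth term, via the sum rule with a differentiable function, shifts everything by $\sum_h B_h^\top\hat p_h(x)$. This yields $\mathcal G(\xi)$ with $\xi\in\mathcal D^\star(x)$.

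Third, we remove the convex hull. The active set is the optimal set of a linear objective over the convex set $\mathcal K_\nu\times\mathcal K_r$, so it is convex. Since $\mathcal G$ is linear, its image is already convex, so no hull is needed. This gives \eqref{eq:efg_exact_subdiff}. The formula \eqref{eq:efg_directional_derivative} then follows either from Danskin directly or from $f'(x;d)=\max_{g\in\partial f(x)}g^\top d$, which holds for finite convex functions; the max is attained because $\mathcal D^\star(x)$ is compact, being a closed subset of a compact set times a singleton.

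As a sanity check independent of Danskin, the inclusion $\supseteq$ follows immediately from \eqref{eq:sdr_support_property}, since each $\mathcal G(\xi)$ is the slope of an affine minorant that is tight at $x$. \textbf{The main obstacle} is the reverse inclusion $\subseteq$, which is where compactness is needed. If one prefers to avoid citing Danskin, we would argue by contradiction. Suppose $g\in\partial f(x)$ lies outside the compact convex set $S=\mathcal G(\mathcal D^\star(x))$. Separate $g$ from $S$ by a direction $d$ with $g^\top d>\max_{s\in S}s^\top d$. Then take $t_k\downarrow0$, choose maximizers $\xi_k\in\mathcal D^\star(x+t_kd)$, and extract a convergent subsequence using compactness of $\mathcal K_\nu\times\mathcal K_r$ together with $\hat p(x+t_kd)\to\hat p(x)$. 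The limit lies in $\mathcal D^\star(x)$ by continuity of $\Psi$. This gives $f'(x;d)\le\max_S s^\top d<g^\top d$, contradicting the subgradient inequality $f'(x;d)\ge g^\top d$.
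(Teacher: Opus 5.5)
Your proposal is correct, but your main route differs from the paper's. You make the index set compact by splitting off the quadratic block (solved in closed form), apply the Danskin--Valadier theorem to the remaining max of affine functions over $\mathcal K_\nu\times\mathcal K_r$, add back the smooth quadratic term by the sum rule, and drop the convex hull because the active set is the argmax of a linear function over a convex set.

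The paper instead proves the result from scratch, without restructuring the problem:
\begin{itemize}
\item Step~1 uses the support property \eqref{eq:sdr_support_property}. It gives $\mathcal G(\mathcal D^\star(x))\subseteq\partial f_{\mathrm{SOC}}(x)$ and hence $f'_{\mathrm{SOC}}(x;d)\ge\max_{\xi\in\mathcal D^\star(x)}\mathcal G(\xi)^\top d$.
\item Step~2 takes $\xi^{(n)}\in\mathcal D^\star(x+t_nd)$ and uses the bound $f_{\mathrm{SOC}}(x+t_nd)-f_{\mathrm{SOC}}(x)\le\Psi(x+t_nd;\xi^{(n)})-\Psi(x;\xi^{(n)})=t_n\,\mathcal G(\xi^{(n)})^\top d$. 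A convergent subsequence then gives the reverse inequality. The unboundedness of $\mathcal P$ is harmless here, because $p^{(n)}_h=\alpha_h(B_h(x+t_nd)+e_h)$ is explicitly bounded along the sequence.
\item Step~3 concludes $\partial f_{\mathrm{SOC}}(x)=\mathcal G(\mathcal D^\star(x))$, since two nonempty compact convex sets with the same support function coincide.
\end{itemize}

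Your fallback contradiction argument is essentially this same proof, with strict separation in place of the support-function identity; the two are equivalent. If you write it out, state the difference-quotient bound above explicitly. You left it implicit, and it is the step that links $\xi_k$ to $f'_{\mathrm{SOC}}(x;d)$.

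The Danskin route is shorter given the citation. The paper's route is self-contained, derives the directional-derivative formula first and the subdifferential from it, and needs boundedness only along the chosen maximizers rather than compactness of the whole dual feasible set.
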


The proof is deferred to Appendix~\ref{app:sec:efg}. 
The usual convex hull in Danskin-type formulas is implicit here, because the optimal-dual set \(\mathcal D^\star(x)\) is convex and the readout map \(\mathcal G\) is linear.

Theorem~\ref{thm:efg_exact_readout} gives a computable form of first-order sensitivity for SOC-ICNNs. 
Rather than viewing the network only as a generic max-function or relying on an implementation-dependent subgradient returned by automatic differentiation, \eqref{eq:efg_exact_subdiff} identifies the full subdifferential as the linear image of the structured optimal-dual set. 
Thus the ReLU, quadratic, and conic multipliers provide an explicit first-order description of the value function.

Using the blockwise factorization in Proposition~\ref{prop:sdr_blockwise_selector}, define the ReLU readout set
\begin{equation}
\mathcal S_{\mathrm{ReLU}}(x)
:=
\left\{
\sum_{\ell=1}^L W_\ell^\top\nu_\ell :
\nu\in \mathcal D^\star_{\mathrm{ReLU}}(x)
\right\}.
\label{eq:efg_relu_readout_set}
\end{equation}
Then the subdifferential decomposes as
\begin{equation}
\partial f_{\mathrm{SOC}}(x)
=
v
+\mathcal S_{\mathrm{ReLU}}(x)
+\sum_{h=1}^H \alpha_h B_h^\top q_h(x)
+\sum_{g=1}^G A_g^\top \mathcal R_g^\star(x).
\label{eq:efg_subdiff_decomposition}
\end{equation}
Here \(A_g^\top\mathcal R_g^\star(x):=\{A_g^\top r_g:r_g\in\mathcal R_g^\star(x)\}\), and the sums involving sets are understood in the Minkowski sense. 
A derivation of \eqref{eq:efg_subdiff_decomposition} is given in Appendix~\ref{app:sec:efg}.

\subsection{Nondegenerate regime}

The subdifferential formula is set-valued exactly when the optimal-dual set is not single-valued. 
By Proposition~\ref{prop:sdr_blockwise_selector}, this can only arise from zero ReLU preactivations or zero conic residuals. 
Away from these structural events, the dual branch is unique and the SOC-ICNN is differentiable.

We call an input \(x\) nondegenerate if
\begin{equation}
a_{\ell,i}(x)\neq 0
\quad \text{for all ReLU coordinates }(\ell,i),
\qquad
u_g(x)\neq 0
\quad \text{for all }g=1,\dots,G.
\label{eq:efg_nondegenerate}
\end{equation}
For generic parameter choices, such structural degeneracies occur only on lower-dimensional regions; nevertheless, the following proposition is pointwise and does not require a probabilistic assumption.

\begin{proposition}[Nondegenerate single-valued readout]
\label{prop:efg_nondegenerate_readout}
If \(x\) is nondegenerate in the sense of \eqref{eq:efg_nondegenerate}, then \(\mathcal D^\star(x)\) is a singleton, and \(f_{\mathrm{SOC}}\) is differentiable at \(x\). 
Its gradient is
\begin{equation}
\nabla f_{\mathrm{SOC}}(x)
=
\mathcal G(\hat\xi(x))
=
v
+\sum_{\ell=1}^L W_\ell^\top\hat\nu_\ell(x)
+\sum_{h=1}^H \alpha_h B_h^\top q_h(x)
+\sum_{g=1}^G A_g^\top\hat r_g(x).
\label{eq:efg_canonical_gradient}
\end{equation}
Any possible failure of differentiability can only occur when some ReLU preactivation vanishes or some conic residual is zero.
\end{proposition}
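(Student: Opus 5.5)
The plan is to combine the blockwise factorization of Proposition~\ref{prop:sdr_blockwise_selector} with the exact first-order readout of Theorem~\ref{thm:efg_exact_readout}. Once $\mathcal D^\star(x)$ is shown to be a singleton, \eqref{eq:efg_exact_subdiff} makes $\partial f_{\mathrm{SOC}}(x)$ a singleton. For a finite convex function on $\mathbb R^{d_0}$, a singleton subdifferential is equivalent to differentiability, with the gradient equal to the unique subgradient. The gradient formula \eqref{eq:efg_canonical_gradient} then follows from \eqref{eq:sdr_canonical_xi}: $\hat\xi(x)$ always lies in $\mathcal D^\star(x)$, so it must be the unique element, and $\mathcal G(\hat\xi(x))$ is exactly \eqref{eq:sdr_canonical_readout}.

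The core step is proving that $\mathcal D^\star(x)$ is a singleton, and I would do it block by block using \eqref{eq:sdr_product_structure}. The quadratic factors are always the singletons $\{\hat p_h(x)\}$. The conic factors are singletons whenever $u_g(x)\neq0$, by \eqref{eq:sdr_conic_optset}. That leaves $\mathcal D^\star_{\mathrm{ReLU}}(x)$. Here I would use the LP duality behind \eqref{eq:ps_relu_lp}: optimal dual variables for the ReLU block must satisfy complementary slackness with the primal optimizer $z_\ell=z_\ell(x)$.

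I would then argue by backward induction on $\ell$, showing that every $\nu\in\mathcal D^\star_{\mathrm{ReLU}}(x)$ equals $\hat\nu(x)$ coordinatewise.
\begin{itemize}[leftmargin=*]
\item If $a_{\ell,i}(x)<0$, the constraint $z_{\ell,i}\ge a_{\ell,i}$ is slack, so its multiplier vanishes: $\nu_{\ell,i}=0$.
\item If $a_{\ell,i}(x)>0$, then $z_{\ell,i}>0$, so the multiplier of $z_{\ell,i}\ge0$ vanishes. Stationarity in $z_{\ell,i}$ then forces $\nu_{\ell,i}$ to equal its upper bound, namely $c_i$ for $\ell=L$, or $(U_{\ell+1}^\top\nu_{\ell+1})_i$ for $\ell<L$.
\end{itemize}
Since $\nu_{\ell+1}=\hat\nu_{\ell+1}(x)$ by the induction hypothesis, these two cases reproduce the recursion \eqref{eq:sdr_relu_canonical} exactly.

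The main obstacle is making this complementary-slackness argument rigorous, because the paper defines $\mathcal D^\star_{\mathrm{ReLU}}(x)$ as the ReLU factor of the argmax of $\Psi$ rather than through explicit KKT conditions. The paper only states that nonuniqueness "can only" arise from zeros, so the characterization of $\mathcal D^\star_{\mathrm{ReLU}}(x)$ itself has to be derived. I would first try a direct route. Write the value of the ReLU part of $\Psi$ at the forward activations as $\sum_\ell \nu_\ell^\top(W_\ell x+b_\ell)$. Compare it with $c^\top z_L(x)$ via the telescoping identity
\[
c^\top z_L-\sum_\ell \nu_\ell^\top(W_\ell x+b_\ell)=\sum_\ell\bigl[(U_{\ell+1}^\top\nu_{\ell+1}-\nu_\ell)^\top z_\ell+\nu_\ell^\top(z_\ell-a_\ell)\bigr],
\]
with the convention $U_{L+1}^\top\nu_{L+1}:=c$. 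Every term on the right is nonnegative on $\mathcal K_\nu$. Optimality means the total gap is zero, so each term vanishes individually, and that yields exactly the two cases above.

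The final sentence of the proposition is then just the contrapositive of what has been proved: at any nondegenerate point $f_{\mathrm{SOC}}$ is differentiable.
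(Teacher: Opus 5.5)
Your proposal is correct and follows the paper's proof essentially step for step. The ingredients match: blockwise singletons via Proposition~\ref{prop:sdr_blockwise_selector}, then Theorem~\ref{thm:efg_exact_readout} together with the singleton-subdifferential criterion for differentiability, with $\hat\xi(x)\in\mathcal D^\star(x)$ identifying the gradient. The only difference is in the ReLU step: the paper isolates it as Lemma~\ref{lem:app_sdr_relu_optset}, proved by invoking LP strong duality and KKT complementary slackness, whereas your telescoping gap identity derives the same coordinatewise conditions directly from $\mathcal K_\nu$; since $\hat\nu(x)$ makes every term vanish, it also certifies zero duality gap without appealing to LP duality.
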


The proof is deferred to Appendix~\ref{app:sec:efg}.

Thus, in the nondegenerate regime, the canonical selector from Proposition~\ref{prop:sdr_blockwise_selector} is the unique optimal-dual branch, and the set-valued subdifferential reduces to the single gradient in \eqref{eq:efg_canonical_gradient}. 
This single-valued readout is the first-order object used in the local second-order analysis below.

\section{Local Regularity and Affine-Curvature Decomposition}
\label{sec:local_curvature_transition}

The previous section gives an exact pointwise description of the first-order geometry. 
We now study how this geometry behaves under local perturbations. 
First, we establish a closed-graph stability property for the optimal-dual and subdifferential maps. 
Then, around nondegenerate inputs, we show that the SOC-ICNN reduces locally to an affine ReLU branch plus explicit smooth curvature terms from the quadratic and conic modules.

\subsection{Set-valued stability}

At structural singularities, the optimal-dual and subdifferential maps may be set-valued. 
The following result shows that these set-valued maps remain stable in the standard outer-semicontinuity sense: limits of valid dual solutions or subgradients remain valid at the limiting input.

\begin{proposition}[Set-valued stability]
\label{prop:lct_setvalued_stability}
The optimal-dual map \(x\mapsto \mathcal D^\star(x)\) has nonempty compact values and a closed graph. 
Consequently, it is outer semicontinuous. 
By Theorem~\ref{thm:efg_exact_readout}, the induced subdifferential map \(x\mapsto \partial f_{\mathrm{SOC}}(x)\) also has nonempty compact values, a closed graph, and is outer semicontinuous.
\end{proposition}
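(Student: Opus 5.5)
We prove the statement in four steps: nonemptiness, compactness, closed graph, and then transfer through the linear readout \(\mathcal G\).

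\textbf{Nonemptiness and compactness.} By Proposition~\ref{prop:sdr_blockwise_selector}, \(\mathcal D^\star(x)\) factorizes as \(\mathcal D^\star_{\mathrm{ReLU}}(x)\times\prod_h\{\hat p_h(x)\}\times\prod_g\mathcal R_g^\star(x)\), and the canonical selector \(\hat\xi(x)\) lies in it, so the set is nonempty.

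\begin{itemize}
\item The quadratic factors are singletons.
\item Each \(\mathcal R_g^\star(x)\) is either a point or a closed ball of radius \(\lambda_g\).
\item \(\mathcal D^\star_{\mathrm{ReLU}}(x)\) is the set of maximizers of a continuous (affine) function over \(\mathcal K_\nu\). The set \(\mathcal K_\nu\) is closed and bounded: \(0\le\nu_L\le c\), and by backward recursion \(0\le \nu_\ell\le U_{\ell+1}^\top\nu_{\ell+1}\) is bounded by \(U_{\ell+1}^\top\) applied to the previous bound. Its argmax set is therefore a closed subset of a compact set.
\end{itemize}

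A finite product of compact sets is compact, so \(\mathcal D^\star(x)\) is compact. Equivalently, one can note that \(\Psi(x;\cdot)\) restricted to the \(p\)-block is strongly concave, so the unbounded factor \(\mathcal P\) causes no trouble.

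\textbf{Closed graph.} Take \(x^k\to x\) and \(\xi^k\in\mathcal D^\star(x^k)\) with \(\xi^k\to\xi\).

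\begin{itemize}
\item Since \(\mathcal K_\nu\) and \(\mathcal K_r\) are closed, \(\xi\) is feasible.
\item By Theorem~\ref{thm:sdr_dual_readout}, \(\Psi(x^k;\xi^k)=f_{\mathrm{SOC}}(x^k)\).
\item \(\Psi\) is jointly continuous in \((x,\xi)\): it is affine in \(x\) for fixed \(\xi\), bilinear in the coupling terms, and quadratic in \(p\).
\item \(f_{\mathrm{SOC}}\) is continuous, being a composition of ReLUs, norms, and quadratics.
\end{itemize}

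Passing to the limit gives \(\Psi(x;\xi)=f_{\mathrm{SOC}}(x)=\max\Psi(x;\cdot)\), hence \(\xi\in\mathcal D^\star(x)\). For a set-valued map with closed values, closed graph is the definition of outer semicontinuity used in Rockafellar--Wets \cite{rockafellar1998variational}, so this gives outer semicontinuity directly.

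I also want local boundedness, which is what allows limits to be extracted from arbitrary sequences. The \(\nu\)- and \(r\)-components lie in the fixed compact sets \(\mathcal K_\nu\) and \(\mathcal K_r\). The \(p\)-component equals \(\alpha_h q_h(x)\), which is continuous and hence bounded near \(x\).

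\textbf{Subdifferential map.} By Theorem~\ref{thm:efg_exact_readout}, \(\partial f_{\mathrm{SOC}}(x)=\mathcal G(\mathcal D^\star(x))\), and \(\mathcal G\) is linear and hence continuous.

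\begin{itemize}
\item Nonempty compact values follow because continuous images of nonempty compact sets are nonempty and compact.
\item For the closed graph, take \(x^k\to x\) and \(g^k=\mathcal G(\xi^k)\to g\) with \(\xi^k\in\mathcal D^\star(x^k)\). By local boundedness, a subsequence \(\xi^{k_j}\to\xi\).
\item The closed graph of \(\mathcal D^\star\) gives \(\xi\in\mathcal D^\star(x)\), and continuity of \(\mathcal G\) gives \(g=\mathcal G(\xi)\in\partial f_{\mathrm{SOC}}(x)\).
\end{itemize}

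The main point requiring care is this step: the readout does not inherit a closed graph from closedness of \(\mathcal D^\star\) alone. The transfer needs the local boundedness established above, and in particular the control of the \(p\)-block, since \(\mathcal P\) itself is unbounded.
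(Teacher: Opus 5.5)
Your proof is correct and follows the paper's argument. Compactness comes from the compactness of $\mathcal K_\nu,\mathcal K_r$ together with the explicit $p$-block, the graph is closed by a limit argument, and the transfer to $\partial f_{\mathrm{SOC}}$ uses a convergent subsequence of multipliers and continuity of $\mathcal G$. The differences are cosmetic: you close the graph via continuity of $f_{\mathrm{SOC}}$ and $\Psi(x^k;\xi^k)=f_{\mathrm{SOC}}(x^k)$ rather than by passing to the limit in $\Psi(x_n;\xi^{(n)})\ge\Psi(x_n;\nu,p,r)$ for each feasible triple, and you spell out the local boundedness of the $p$-block that the paper's ``as above'' leaves implicit.
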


The proof is deferred to Appendix~\ref{app:sec:lct}.

\subsection{Local affine--curvature branch}

We next specialize to nondegenerate inputs. 
If \(\bar x\) avoids zero ReLU preactivations and zero conic residuals, then sufficiently small perturbations preserve the ReLU sign pattern and keep all conic residuals nonzero. 
On such a neighborhood, the ReLU part is affine and the quadratic and conic modules provide explicit smooth curvature.

\begin{theorem}[Local affine--curvature branch]
\label{thm:lct_local_branch}
Let \(\bar x\) be nondegenerate in the sense of \eqref{eq:efg_nondegenerate}. 
Then there exists \(\rho>0\) such that, for every \(x\in \mathbb B(\bar x,\rho)\), the following hold.

\paragraph{(i) Branch uniqueness and local decomposition.}
The ReLU activation pattern is fixed on \(\mathbb B(\bar x,\rho)\), each conic residual \(u_g(x)\) remains nonzero, and the optimal-dual set \(\mathcal D^\star(x)\) is a singleton. 
Consequently,
\begin{equation}
f_{\mathrm{SOC}}(x)
=
\bar a^\top x+\bar b
+
\sum_{h=1}^H \frac{\alpha_h}{2}\|q_h(x)\|_2^2
+
\sum_{g=1}^G \lambda_g \|u_g(x)\|_2
\label{eq:lct_local_decomposition}
\end{equation}
for some constants \(\bar a\in\mathbb R^{d_0}\) and \(\bar b\in\mathbb R\) determined by the fixed local ReLU branch.

\paragraph{(ii) Exact gradient readout.}
On \(\mathbb B(\bar x,\rho)\), define \(\hat u_g(x):=u_g(x)/\|u_g(x)\|_2\). 
Then
\begin{equation}
\nabla f_{\mathrm{SOC}}(x)
=
\mathcal G(\hat\xi(x))
=
\bar a
+
\sum_{h=1}^H \alpha_h B_h^\top q_h(x)
+
\sum_{g=1}^G \lambda_g A_g^\top \hat u_g(x).
\label{eq:lct_local_gradient}
\end{equation}

\paragraph{(iii) Local curvature formula.}
On the same neighborhood, \(f_{\mathrm{SOC}}\) is twice continuously differentiable and
\begin{equation}
\nabla^2 f_{\mathrm{SOC}}(x)
=
\sum_{h=1}^H \alpha_h B_h^\top B_h
+
\sum_{g=1}^G
\lambda_g A_g^\top
\left[
\frac{1}{\|u_g(x)\|_2}
\bigl(I-\hat u_g(x)\hat u_g(x)^\top\bigr)
\right]
A_g
\succeq 0.
\label{eq:lct_local_hessian}
\end{equation}
\end{theorem}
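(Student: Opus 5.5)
The argument splits into a continuity step that produces \(\rho\), an algebraic step that freezes the ReLU branch, and two derivative computations.

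\textbf{Step 1: choosing \(\rho\).} By induction on \(\ell\), the forward activations \(z_\ell(x)\) are continuous in \(x\), since they are compositions of affine maps and \(\sigma\). Hence the preactivations \(a_\ell(x)\) are continuous, and so are the residuals \(u_g(x)\). Nondegeneracy of \(\bar x\) gives finitely many strict conditions \(a_{\ell,i}(\bar x)\neq0\) and \(u_g(\bar x)\neq 0\). Each persists on some ball, and I take \(\rho\) to be the minimum radius. On \(\mathbb B(\bar x,\rho)\) every point is nondegenerate and the sign pattern \(\mathbf 1_{\{a_\ell(x)>0\}}\) equals that at \(\bar x\). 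Proposition~\ref{prop:efg_nondegenerate_readout} then gives that \(\mathcal D^\star(x)\) is a singleton for each such \(x\).

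\textbf{Step 2: the affine ReLU branch.} With the pattern fixed, write \(D_\ell:=\mathrm{diag}(\mathbf 1_{\{a_\ell(\bar x)>0\}})\). Then \(z_\ell(x)=D_\ell(W_\ell x+U_\ell z_{\ell-1}(x)+b_\ell)\), which is affine in \(x\) by induction. It follows that \(f_{\mathrm{ReLU}}(x)=\bar a^\top x+\bar b\) on the ball. This gives \eqref{eq:lct_local_decomposition} directly from the definition \eqref{eq:intro_soc_icnn_def}.

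To identify \(\bar a\) with the canonical readout, I unroll the recursion. The coefficient of \(x\) in \(c^\top z_L\) is \(\sum_\ell W_\ell^\top \nu_\ell\), where \(\nu_L=D_Lc\) and \(\nu_\ell=D_\ell U_{\ell+1}^\top\nu_{\ell+1}\). These are exactly \(\hat\nu_\ell\) from \eqref{eq:sdr_relu_canonical}, now constant on the ball. Hence \(\bar a=v+\sum_\ell W_\ell^\top\hat\nu_\ell(\bar x)\).

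\textbf{Step 3: gradient and Hessian.} I differentiate \eqref{eq:lct_local_decomposition} term by term.
\begin{itemize}[leftmargin=*]
\item The quadratic term gives \(\alpha_hB_h^\top q_h(x)\).
\item The norm term is smooth wherever \(u_g\neq0\) and gives \(\lambda_gA_g^\top\hat u_g(x)\).
\end{itemize}
Comparing with \eqref{eq:sdr_canonical_readout} and \eqref{eq:sdr_conic_canonical} yields \eqref{eq:lct_local_gradient} and its equality with \(\mathcal G(\hat\xi(x))\). This is also consistent with Proposition~\ref{prop:efg_nondegenerate_readout}.

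For the Hessian, the Jacobian of \(w\mapsto w/\|w\|_2\) at \(w\neq0\) is \(\|w\|_2^{-1}(I-\hat w\hat w^\top)\). Applying the chain rule with \(u_g=A_gx+d_g\) gives \eqref{eq:lct_local_hessian}. All entries are continuous on the ball, so the function is \(C^2\) there. Positive semidefiniteness holds because \(I-\hat u\hat u^\top\) is an orthogonal projector, \(\lambda_g\ge0\), and \(\alpha_h>0\).

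\textbf{Main obstacle.} The only delicate point is Step 1: justifying that a fixed activation pattern persists. This requires continuity of \(z_{\ell-1}\) inside \(a_\ell\), handled by the induction. It also requires noting that units with \(a_{\ell,i}<0\) stay inactive, so the \(\sigma\) kinks are never crossed. Everything after that is routine calculus.
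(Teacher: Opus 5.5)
Your proof is correct and follows the paper's argument essentially step for step. Continuity of the preactivations and conic residuals gives a ball on which the sign pattern is frozen and every point is nondegenerate; the masks $D_\ell$ make the ReLU part affine by induction; and term-by-term differentiation gives the gradient, the Hessian, and positive semidefiniteness via the projector $I-\hat u_g\hat u_g^\top$. The one small difference is that you unroll the recursion to identify $\bar a=v+\sum_\ell W_\ell^\top\hat\nu_\ell(\bar x)$ and so verify $\nabla f_{\mathrm{SOC}}(x)=\mathcal G(\hat\xi(x))$ directly, whereas the paper obtains that identity from Proposition~\ref{prop:efg_nondegenerate_readout}; your route is slightly more self-contained and makes $\bar a$ explicit.
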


The proof is deferred to Appendix~\ref{app:sec:lct}.

Theorem~\ref{thm:lct_local_branch} completes the local second-order characterization. 
On any nondegenerate neighborhood, the SOC-ICNN consists of a fixed affine ReLU branch plus smooth quadratic and norm terms. 
The same canonical dual branch therefore gives the local gradient through \eqref{eq:lct_local_gradient}, while the curvature is given explicitly by \eqref{eq:lct_local_hessian}.

\section{Numerical Experiments}
\label{sec:numerical_experiments}

This section numerically validates the geometric claims developed in the previous sections. 
All experiments are diagnostic rather than training-based: the networks are randomly initialized or explicitly constructed, and the goal is to verify whether the proposed dual readout reproduces the first- and second-order objects predicted by the theory. 
Experiment~1 verifies the exact first-order readout in Theorem~\ref{thm:efg_exact_readout} and Proposition~\ref{prop:efg_nondegenerate_readout}. 
Experiment~2 verifies the local affine--curvature decomposition and Hessian formula in Theorem~\ref{thm:lct_local_branch}. 
Experiment~3 examines the degenerate regime identified in Proposition~\ref{prop:sdr_blockwise_selector} and tests the directional-derivative formula in Theorem~\ref{thm:efg_exact_readout}. 
Experiment~4 provides a compact white-box inference tutorial. 
All computations are performed in double precision. 
Detailed tables, metric definitions, and tutorial steps are provided in Appendix~\ref{app:exp_details}.

\subsection{Experiment 1: Exact first-order readout}

The first experiment verifies whether the canonical dual branch recovers the same first-order information as automatic differentiation on nondegenerate inputs. 
For a randomly initialized SOC-ICNN, we sample input points \(x\), compute the canonical multiplier branch \(\hat\xi(x)\), and evaluate the explicit readout \(g_{\mathrm{readout}}(x):=\mathcal G(\hat\xi(x))\). 
We compare it with the autodiff gradient \(g_{\mathrm{autodiff}}(x):=\nabla_x f_{\mathrm{SOC}}(x)\). 
As shown in Table~\ref{tab:app_exp1_exact_readout}, the readout and autodiff gradients agree up to numerical precision, with mean \(L_2\) error \(4.75\times10^{-15}\) and mean relative error \(1.37\times10^{-16}\). 
This confirms the exact first-order readout predicted by Theorem~\ref{thm:efg_exact_readout}.

\subsection{Experiment 2: Local affine--curvature branch}

The second experiment verifies the local second-order description in Theorem~\ref{thm:lct_local_branch}. 
On nondegenerate neighborhoods, the ReLU activation pattern is fixed and the conic residuals remain nonzero, so the SOC-ICNN locally decomposes into an affine ReLU branch, quadratic terms, and smooth norm terms. 
We compare the explicit gradient and Hessian formulas with automatic differentiation and also test the local quadratic approximation. 
Table~\ref{tab:app_exp2_local_formula} shows that the explicit gradient and Hessian match autodiff up to numerical precision, with mean Hessian Frobenius error \(6.72\times10^{-16}\). 
Table~\ref{tab:app_exp2_local_quad} further shows that the local quadratic approximation error remains small and increases smoothly with the perturbation radius, as expected from the local affine--curvature decomposition.

\subsection{Experiment 3: Degenerate first-order geometry}

The third experiment examines the set-valued first-order geometry at a structurally degenerate input. 
We construct a two-dimensional SOC-ICNN such that, at a prescribed point \(x_0\), one ReLU preactivation is exactly zero and one conic residual also vanishes. 
For randomly sampled directions, we compare the one-sided finite-difference directional derivative with the exact max-readout formula in \eqref{eq:efg_directional_derivative}. 
As reported in Table~\ref{tab:app_exp3_degeneracy}, the mean absolute error is \(1.17\times10^{-8}\), confirming the directional-derivative formula at the degenerate point. 
The canonical readout is a valid subgradient selection but is not generally the directionally maximizing branch, consistent with the set-valued geometry described in Proposition~\ref{prop:sdr_blockwise_selector}.

\subsection{Experiment 4: White-box inference tutorial}
\label{sec:exp4_whitebox_tutorial}

We finally include a white-box inference tutorial to illustrate how the proposed dual readout can be used in a complete downstream optimization loop. 
Given a query vector \(y\in\mathbb R^d\), we solve the strongly convex inference problem
\begin{equation}
\label{eq:exp4_inference_problem}
x^\star(y)\in
\argmin_{x\in\mathbb R^d}
\left\{
F_y(x):=
f_{\mathrm{SOC}}(x)
+
\frac{\beta}{2}\|x-y\|_2^2
\right\}.
\end{equation}
The experiment uses a nontrivial SOC-ICNN with a ReLU backbone, one quadratic module, and two norm modules. 
We compare white-box first- and second-order inference methods against their Torch-autodiff counterparts. 
The white-box methods use the explicit multiplier readout and local Hessian formula developed in Sections~\ref{sec:structured_dual_readout}--\ref{sec:local_curvature_transition}, while the Torch methods obtain the same derivative information through automatic differentiation. 
Tables~\ref{tab:app_exp4_whitebox_inference}--\ref{tab:app_exp4_readout_diagnostic} show that the white-box and autodiff methods produce nearly identical first- and second-order optimization behavior, while the white-box Newton method reduces derivative-construction cost relative to Torch-Newton. 
The full step-by-step tutorial is given in Appendix~\ref{app:whitebox_tutorial}.

\section{Conclusion}
\label{sec:conclusion}

This paper shows that the first-order and local second-order geometry of SOC-ICNNs is explicitly encoded in the dual variables of their value-function representation. 
The dual characterization separates the set-valued first-order behavior caused by ReLU kinks and zero conic residuals from nondegenerate regions where closed-form gradients and Hessians are available. 
These results provide geometric primitives for white-box SOC-ICNN inference. 
Future work will focus on inference algorithms that exploit this dual geometry directly, especially near nonsmooth structural boundaries.
\bibliographystyle{plainnat}
\bibliography{reference}

@inproceedings{amos2017icnn,
  author    = {Brandon Amos and Lei Xu and J. Zico Kolter},
  title     = {Input Convex Neural Networks},
  booktitle = {Proc. 34th Int. Conf. Mach. Learn.},
  address   = {Sydney, NSW, Australia},
  pages     = {146--155},
  month     = aug,
  year      = {2017}
}

@inproceedings{amos2017optnet,
  author    = {Brandon Amos and J. Zico Kolter},
  title     = {{OptNet}: Differentiable Optimization as a Layer in Neural Networks},
  booktitle = {Proc. 34th Int. Conf. Mach. Learn.},
  address   = {Sydney, NSW, Australia},
  pages     = {136--145},
  month     = aug,
  year      = {2017}
}

@inproceedings{agrawal2019dco,
  author    = {Akshay Agrawal and Brandon Amos and Shane Barratt and Stephen Boyd and Steven Diamond and J. Zico Kolter},
  title     = {Differentiable Convex Optimization Layers},
  booktitle = {Proc. Adv. Neural Inf. Process. Syst.},
  address   = {Vancouver, BC, Canada},
  month     = dec,
  year      = {2019}
}

@article{gould2019ddn,
  author  = {Stephen Gould and Richard Hartley and Dylan Campbell},
  title   = {Deep Declarative Networks},
  journal = {IEEE Trans. Pattern Anal. Mach. Intell.},
  volume  = {44},
  number  = {8},
  pages   = {3988--4004},
  month   = aug,
  year    = {2022}
}

@inproceedings{amos2018dmpc,
  author    = {Brandon Amos and Ivan Dario Jimenez Rodriguez and Jacob Sacks and Byron Boots and J. Zico Kolter},
  title     = {Differentiable {MPC} for End-to-End Planning and Control},
  booktitle = {Proc. Adv. Neural Inf. Process. Syst.},
  address   = {Montreal, QC, Canada},
  pages     = {8299--8310},
  month     = dec,
  year      = {2018}
}

@inproceedings{agrawal2020cocp,
  author    = {Akshay Agrawal and Shane Barratt and Stephen Boyd and Bartolomeo Stellato},
  title     = {Learning Convex Optimization Control Policies},
  booktitle = {Proc. 2nd Conf. Learn. Dyn. Control},
  address   = {Berkeley, CA, USA},
  pages     = {361--373},
  month     = jun,
  year      = {2020}
}

@inproceedings{bunning2021buildingmpc,
  author    = {Felix B{\"u}nning and Adrian Schalbetter and Ahmed Aboudonia and Mathias Hudoba de Badyn and Philipp Heer and John Lygeros},
  title     = {Input Convex Neural Networks for Building {MPC}},
  booktitle = {Proc. 3rd Conf. Learn. Dyn. Control},
  address   = {Virtual Event},
  pages     = {251--262},
  month     = jun,
  year      = {2021}
}

@article{alvarez2021jkoicnn,
  author = {David Alvarez-Melis and Yair Schiff and Youssef Mroueh},
  title   = {Optimizing Functionals on the Space of Probabilities with Input Convex Neural Networks},
  journal = {Trans. Mach. Learn. Res.},
  year    = {2022}
}

@inproceedings{montufar2014linearregions,
  author    = {Guido F. Mont{\'u}far and Razvan Pascanu and Kyunghyun Cho and Yoshua Bengio},
  title     = {On the Number of Linear Regions of Deep Neural Networks},
  booktitle = {Proc. Adv. Neural Inf. Process. Syst.},
  address   = {Montreal, QC, Canada},
  pages     = {2924--2932},
  month     = dec,
  year      = {2014}
}

@inproceedings{raghu2017expressive,
  author    = {Maithra Raghu and Ben Poole and Jon Kleinberg and Surya Ganguli and Jascha Sohl Dickstein},
  title     = {On the Expressive Power of Deep Neural Networks},
  booktitle = {Proc. 34th Int. Conf. Mach. Learn.},
  address   = {Sydney, NSW, Australia},
  pages     = {2847--2854},
  month     = aug,
  year      = {2017}
}

@inproceedings{hanin2019linearregions,
  author    = {Boris Hanin and David Rolnick},
  title     = {Complexity of Linear Regions in Deep Networks},
  booktitle = {Proc. 36th Int. Conf. Mach. Learn.},
  address   = {Long Beach, CA, USA},
  pages     = {2596--2604},
  month     = jun,
  year      = {2019}
}

@inproceedings{balestriero2018spline,
  author    = {Randall Balestriero and Richard G. Baraniuk},
  title     = {A Spline Theory of Deep Learning},
  booktitle = {Proc. 35th Int. Conf. Mach. Learn.},
  address   = {Stockholm, Sweden},
  pages     = {374--383},
  month     = jul,
  year      = {2018}
}

@inproceedings{balestriero2019powerdiagram,
  author    = {Randall Balestriero and Romain Cosentino and Behnaam Aazhang and Richard G. Baraniuk},
  title     = {The Geometry of Deep Networks: Power Diagram Subdivision},
  booktitle = {Proc. Adv. Neural Inf. Process. Syst.},
  address   = {Vancouver, BC, Canada},
  month     = dec,
  year      = {2019}
}

@article{tian2023stationarity,
  author  = {Lai Tian and Anthony Man-Cho So},
  title   = {Testing Stationarity Concepts for {ReLU} Networks: Hardness, Regularity, and Robust Algorithms},
  journal = {arXiv preprint arXiv:2302.12261},
  year    = {2023}
}

@article{balestriero2024geometry,
  author  = {Randall Balestriero and Ahmed Imtiaz Humayun and Richard G. Baraniuk},
  title   = {On the Geometry of Deep Learning},
  journal = {Notices Amer. Math. Soc.},
  volume  = {72},
  number  = {4},
  pages   = {374--385},
  year    = {2025}
}

@book{danskin1966,
  author    = {J. M. Danskin},
  title     = {The Theory of Max-Min and Its Application to Weapons Allocation Problems},
  publisher = {Springer-Verlag},
  address   = {Berlin, Germany},
  year      = {1967}
}

@book{rockafellar1998variational,
  author    = {R. Tyrrell Rockafellar and Roger J. B. Wets},
  title     = {Variational Analysis},
  publisher = {Springer},
  address   = {Berlin, Germany},
  year      = {1998}
}

@book{bonnans2000perturbation,
  author    = {Joseph Fr{\'e}d{\'e}ric Bonnans and Alexander Shapiro},
  title     = {Perturbation Analysis of Optimization Problems},
  publisher = {Springer},
  address   = {New York, NY, USA},
  year      = {2000}
}

@article{LiuHu2026SOCICNN,
  author  = {Kang Liu and Jianchen Hu},
  title   = {{SOC-ICNN}: From Polyhedral to Conic Geometry for Learning Convex Surrogate Functions},
  journal = {arXiv preprint arXiv:2604.22355},
  year    = {2026}
}

@inproceedings{makkuva2020oticnn,
  author    = {Ashok Vardhan Makkuva and Amirhossein Taghvaei and Sewoong Oh and Jason D. Lee},
  title     = {Optimal Transport Mapping via Input Convex Neural Networks},
  booktitle = {Proc. 37th Int. Conf. Mach. Learn.},
  address   = {Virtual Event},
  pages     = {6672--6681},
  month     = jul,
  year      = {2020}
}

@article{lawrynczuk2022icnnmpc,
  author  = {Maciej {\L}awry{\'n}czuk},
  title   = {Input Convex Neural Networks in Nonlinear Predictive Control: A Multi-Model Approach},
  journal = {Neurocomputing},
  volume  = {513},
  pages   = {273--293},
  month   = nov,
  year    = {2022}
}

@article{bolte2021conservative,
  author  = {J{\'e}r{\^o}me Bolte and Edouard Pauwels},
  title   = {Conservative Set Valued Fields, Automatic Differentiation, Stochastic Gradient Methods and Deep Learning},
  journal = {Math. Program.},
  volume  = {188},
  number  = {1},
  pages   = {19--51},
  year    = {2021}
}

@inproceedings{lee2020correctness,
  author    = {Wonyeol Lee and Hangyeol Yu and Xavier Rival and Hongseok Yang},
  title     = {On Correctness of Automatic Differentiation for Non-Differentiable Functions},
  booktitle = {Proc. Adv. Neural Inf. Process. Syst.},
  address   = {Virtual Event},
  month     = dec,
  year      = {2020}
}

\newpage
\appendix
\section{Additional Proofs}
\label{app:additional_proofs}

\subsection{Proofs for Section~\ref{sec:structured_dual_readout}}
\label{app:sec:sdr}

\subsubsection*{Proof of Theorem~\ref{thm:sdr_dual_readout}}

\begin{proof}
We derive the three architectural blocks separately and then combine them.

\paragraph{Step 1: ReLU block.}
Recall the LP representation of the ReLU backbone:
\[
\begin{aligned}
f_{\mathrm{ReLU}}(x)
=
\min_{\{z_\ell\}_{\ell=1}^L}\quad
& c^\top z_L+v^\top x+b_0 \\
\text{s.t.}\quad
& z_\ell \ge W_\ell x+U_\ell z_{\ell-1}+b_\ell,\qquad \ell=1,\dots,L,\\
& z_\ell \ge 0,\qquad \ell=1,\dots,L.
\end{aligned}
\]
Introduce multipliers $\nu_\ell\ge 0$ for
\[
z_\ell-\bigl(W_\ell x+U_\ell z_{\ell-1}+b_\ell\bigr)\ge 0
\]
and multipliers $\mu_\ell\ge 0$ for $z_\ell\ge 0$. The Lagrangian is
\[
\begin{aligned}
\mathcal L(z,\nu,\mu;x)
=\ c^\top z_L+v^\top x+b_0 \ -\sum_{\ell=1}^L \nu_\ell^\top\Bigl(z_\ell-W_\ell x-U_\ell z_{\ell-1}-b_\ell\Bigr)
-\sum_{\ell=1}^L \mu_\ell^\top z_\ell .
\end{aligned}
\]
Collecting the coefficients of the primal variables gives
\[
\begin{aligned}
\mathcal L(z,\nu,\mu;x)
=\ v^\top x+b_0+\sum_{\ell=1}^L \nu_\ell^\top(W_\ell x+b_\ell) \ +(c-\nu_L-\mu_L)^\top z_L
+\sum_{\ell=1}^{L-1}(U_{\ell+1}^\top \nu_{\ell+1}-\nu_\ell-\mu_\ell)^\top z_\ell .
\end{aligned}
\]
For the dual function to be finite, all coefficients of $z_\ell$ must vanish. Hence
\[
c-\nu_L-\mu_L=0,
\qquad
U_{\ell+1}^\top \nu_{\ell+1}-\nu_\ell-\mu_\ell=0,
\quad \ell=1,\dots,L-1.
\]
Since $\mu_\ell\ge 0$, this is equivalent to
\[
0\le \nu_L\le c,
\qquad
0\le \nu_\ell\le U_{\ell+1}^\top \nu_{\ell+1},
\quad \ell=1,\dots,L-1,
\]
which is exactly $\mathcal K_\nu$ in \eqref{eq:sdr_Knu}. Therefore
\[
f_{\mathrm{ReLU}}(x)
=
\max_{\nu\in\mathcal K_\nu}
\left\{
v^\top x+b_0+\sum_{\ell=1}^L \nu_\ell^\top(W_\ell x+b_\ell)
\right\}.
\]

\paragraph{Step 2: quadratic block.}
For each $h=1,\dots,H$,
\[
\frac{\alpha_h}{2}\|B_hx+e_h\|_2^2
=
\max_{p_h\in\mathbb R^{m_h}}
\left\{
p_h^\top(B_hx+e_h)-\frac{1}{2\alpha_h}\|p_h\|_2^2
\right\}.
\]
Indeed,
\[
p_h^\top q-\frac{1}{2\alpha_h}\|p_h\|_2^2
=
-\frac{1}{2\alpha_h}\|p_h-\alpha_h q\|_2^2+\frac{\alpha_h}{2}\|q\|_2^2,
\]
so the maximum is attained at $p_h=\alpha_h q$.

\paragraph{Step 3: conic block.}
For each $g=1,\dots,G$,
\[
\lambda_g\|A_gx+d_g\|_2
=
\max_{\|r_g\|_2\le \lambda_g} r_g^\top(A_gx+d_g),
\]
which is the support-function representation of the Euclidean ball. Equivalently, if
\[
\mathcal K_r
=
\prod_{g=1}^G \{r_g\in\mathbb R^{k_g}:\|r_g\|_2\le \lambda_g\},
\]
then
\[
\sum_{g=1}^G \lambda_g\|A_gx+d_g\|_2
=
\max_{r\in\mathcal K_r}\sum_{g=1}^G r_g^\top(A_gx+d_g).
\]

\paragraph{Step 4: combine the three blocks.}
Substituting the three variational representations into
\[
f_{\mathrm{SOC}}(x)
=
f_{\mathrm{ReLU}}(x)
+
\sum_{h=1}^H \frac{\alpha_h}{2}\|q_h(x)\|_2^2
+
\sum_{g=1}^G \lambda_g\|u_g(x)\|_2
\]
gives
\[
f_{\mathrm{SOC}}(x)
=
\max_{\nu\in\mathcal K_\nu,\;p\in\mathcal P,\;r\in\mathcal K_r}
\Psi(x;\nu,p,r),
\]
with \(\Psi\) defined in \eqref{eq:sdr_Psi}.

\paragraph{Step 5: support property.}
Fix any $\xi=(\nu,p,r)\in \mathcal D^\star(x)$. By definition of $\mathcal D^\star(x)$,
\[
f_{\mathrm{SOC}}(x)=\Psi(x;\xi).
\]
For any $x'\in\mathbb R^{d_0}$, the same triple $\xi$ is feasible in the maximization problem defining $f_{\mathrm{SOC}}(x')$, so
\[
f_{\mathrm{SOC}}(x')\ge \Psi(x';\xi).
\]
This proves \eqref{eq:sdr_support_property}. Since $\Psi(\cdot;\xi)$ is affine in $x$, its slope is exactly $\mathcal G(\xi)$ as defined in \eqref{eq:sdr_readout}. Hence every optimal dual triple induces an affine support of $f_{\mathrm{SOC}}$ at $x$.
\end{proof}

\subsubsection*{Characterization of the ReLU optimal set}

For the ReLU block, define the coordinatewise upper bound
\begin{equation}
\operatorname{ub}_{\ell,i}(\nu)
:=
\begin{cases}
c_i, & \ell=L,\\[1mm]
\bigl(U_{\ell+1}^\top \nu_{\ell+1}\bigr)_i, & \ell=1,\dots,L-1.
\end{cases}
\label{eq:app_sdr_ub}
\end{equation}

\begin{lemma}[Exact ReLU optimal-set characterization]
\label{lem:app_sdr_relu_optset}
Fix $x\in\mathbb R^{d_0}$. The ReLU optimal multiplier set $\mathcal D^\star_{\mathrm{ReLU}}(x)$ consists exactly of all $\nu\in\mathcal K_\nu$ such that, for every coordinate $(\ell,i)$,
\[
\nu_{\ell,i}=0 \quad \text{if } a_{\ell,i}(x)<0,
\]
\[
\nu_{\ell,i}=\operatorname{ub}_{\ell,i}(\nu) \quad \text{if } a_{\ell,i}(x)>0,
\]
and
\[
0\le \nu_{\ell,i}\le \operatorname{ub}_{\ell,i}(\nu) \quad \text{if } a_{\ell,i}(x)=0.
\]
\end{lemma}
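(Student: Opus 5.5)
We will treat the ReLU block on its own, as a primal--dual pair of linear programs. The primal is the LP \eqref{eq:ps_relu_lp} with fixed $x$. The dual is $\max_{\nu\in\mathcal K_\nu}\{v^\top x+b_0+\sum_\ell\nu_\ell^\top(W_\ell x+b_\ell)\}$, derived in Step~1 of the proof of Theorem~\ref{thm:sdr_dual_readout}. Both problems are feasible and bounded, so LP strong duality gives a characterization of dual optimality: $\nu\in\mathcal K_\nu$ is optimal if and only if it satisfies complementary slackness with one (equivalently, every) primal optimal solution. For the primal solution we take the forward activations $z_\ell=z_\ell(x)=\sigma(a_\ell(x))$, which the paper recalls are optimal for \eqref{eq:ps_relu_lp}.

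We then write out the two families of complementarity conditions from the Lagrangian. The first is $\nu_{\ell,i}\bigl(z_{\ell,i}-a_{\ell,i}(x)\bigr)=0$, where the constraint residual evaluated at the forward pass equals $\sigma(a_{\ell,i})-a_{\ell,i}$. The second is $\mu_{\ell,i}z_{\ell,i}=0$, with the implicit multiplier $\mu_{\ell,i}=\operatorname{ub}_{\ell,i}(\nu)-\nu_{\ell,i}\ge0$; this identity comes from the stationarity equations $c-\nu_L-\mu_L=0$ and $U_{\ell+1}^\top\nu_{\ell+1}-\nu_\ell-\mu_\ell=0$.

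We then split into three cases on the sign of $a_{\ell,i}(x)$.
\begin{itemize}
\item If $a_{\ell,i}<0$, then $z_{\ell,i}=0$ and the first residual equals $-a_{\ell,i}>0$. Hence $\nu_{\ell,i}=0$, and the second condition holds automatically.
\item If $a_{\ell,i}>0$, then $z_{\ell,i}=a_{\ell,i}>0$, so the first residual vanishes. The second condition then forces $\mu_{\ell,i}=0$, that is, $\nu_{\ell,i}=\operatorname{ub}_{\ell,i}(\nu)$.
\item If $a_{\ell,i}=0$, both residuals vanish. Only the box constraint from $\mathcal K_\nu$ remains, namely $0\le\nu_{\ell,i}\le\operatorname{ub}_{\ell,i}(\nu)$.
\end{itemize}
Together with membership in $\mathcal K_\nu$, these conditions are exactly the stated characterization. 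Since complementary slackness with a fixed primal optimum is both necessary and sufficient for dual optimality, both inclusions follow.

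The main point requiring care is to make sure the primal optimality of the forward pass is genuinely available. We would either cite it from \cite{LiuHu2026SOCICNN}, or verify it directly in the same way. To verify it directly, note that $\hat\nu(x)$ from \eqref{eq:sdr_relu_canonical} lies in $\mathcal K_\nu$ and is complementary to $z(x)$. Hence the primal value at $z(x)$ equals the dual value at $\hat\nu(x)$, which certifies that both are optimal. This certificate comes from weak duality and needs no appeal to the unproven structure of the dual.

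We also need sufficiency stated cleanly: if some dual-feasible $\nu$ is complementary to a primal-feasible $z$, then the duality gap, which equals $\sum_\ell\bigl[\nu_\ell^\top(z_\ell-a_\ell)+\mu_\ell^\top z_\ell\bigr]$, is zero. This follows by expanding the Lagrangian identity already computed in Step~1.
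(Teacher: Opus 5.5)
Your proposal is correct and follows the paper's proof essentially step for step: complementary slackness against the forward activations $z_\ell(x)=\sigma(a_\ell(x))$, with the multipliers $\mu_\ell$ read off from the stationarity equations, followed by the same three-way case split on the sign of $a_{\ell,i}(x)$. The one addition is your zero-gap certificate via $\hat\nu(x)$ and the explicit gap identity. It makes primal optimality of the forward pass self-contained, where the paper simply takes it as given, and it lets the whole argument rest on weak duality alone without invoking the LP strong duality theorem.
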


\begin{proof}
Let $z_\ell(x)=\max\{a_\ell(x),0\}$ be the primal-optimal activations. For any dual-feasible sequence $\nu$, define
\[
\mu_L:=c-\nu_L,
\qquad
\mu_\ell:=U_{\ell+1}^\top \nu_{\ell+1}-\nu_\ell,
\quad \ell=1,\dots,L-1.
\]
Then $\mu_\ell\ge 0$ is equivalent to $\nu\in\mathcal K_\nu$. By strong duality for the ReLU LP, $\nu$ is dual-optimal if and only if the KKT conditions hold. The only nontrivial condition is complementary slackness:
\[
\nu_{\ell,i}\bigl(z_{\ell,i}(x)-a_{\ell,i}(x)\bigr)=0,
\qquad
\mu_{\ell,i} z_{\ell,i}(x)=0.
\]

If $a_{\ell,i}(x)<0$, then $z_{\ell,i}(x)=0$ and $z_{\ell,i}(x)-a_{\ell,i}(x)>0$, so $\nu_{\ell,i}=0$.

If $a_{\ell,i}(x)>0$, then $z_{\ell,i}(x)=a_{\ell,i}(x)>0$, so $\mu_{\ell,i}=0$, equivalently $\nu_{\ell,i}=\operatorname{ub}_{\ell,i}(\nu)$.

If $a_{\ell,i}(x)=0$, then both complementary slackness products vanish automatically, and only dual feasibility remains:
\[
0\le \nu_{\ell,i}\le \operatorname{ub}_{\ell,i}(\nu).
\]

This proves necessity. Conversely, if $\nu\in\mathcal K_\nu$ satisfies these three coordinatewise conditions, then the corresponding $\mu$ is dual-feasible, stationarity holds by definition, and complementary slackness follows from the same case distinction. Hence $\nu$ is dual-optimal.
\end{proof}

\subsubsection*{Proof of Proposition~\ref{prop:sdr_blockwise_selector}}

\begin{proof}
We prove the factorization, then identify the canonical selector.

\paragraph{Step 1: product structure.}
By Theorem~\ref{thm:sdr_dual_readout},
\[
\Psi(x;\nu,p,r)
=
\Psi_{\mathrm{ReLU}}(x;\nu)
+\sum_{h=1}^H \Psi_h^{\mathrm Q}(x;p_h)
+\sum_{g=1}^G \Psi_g^{\mathrm C}(x;r_g),
\]
where
\[
\Psi_{\mathrm{ReLU}}(x;\nu)
=
v^\top x+b_0+\sum_{\ell=1}^L \nu_\ell^\top(W_\ell x+b_\ell),
\]
\[
\Psi_h^{\mathrm Q}(x;p_h)
=
p_h^\top(B_hx+e_h)-\frac{1}{2\alpha_h}\|p_h\|_2^2,
\]
and
\[
\Psi_g^{\mathrm C}(x;r_g)=r_g^\top(A_gx+d_g).
\]
The feasible set is the Cartesian product
\[
\mathcal K_\nu\times\mathcal P\times\mathcal K_r.
\]
Hence a triple $(\nu,p,r)$ is optimal if and only if each block is optimal for its own subproblem, which yields \eqref{eq:sdr_product_structure}.

\paragraph{Step 2: quadratic and conic blocks.}
For each $h$, the quadratic block is uniquely optimized by
\[
\hat p_h(x)=\alpha_h(B_hx+e_h),
\]
which is exactly \eqref{eq:sdr_quad_closed_form}. For each $g$, the conic block is the support-function problem
\[
\max_{\|r_g\|_2\le \lambda_g} r_g^\top u_g(x).
\]
If $u_g(x)\neq 0$, the unique maximizer is
\[
\lambda_g \frac{u_g(x)}{\|u_g(x)\|_2}.
\]
If $u_g(x)=0$, every feasible $r_g$ is optimal, yielding \eqref{eq:sdr_conic_optset}. The canonical selector \eqref{eq:sdr_conic_canonical} is therefore optimal.

\paragraph{Step 3: ReLU canonical selector.}
Fix $x$ and define
\[
\hat\nu_L(x)=c\odot \mathbf{1}_{\{a_L(x)>0\}},
\qquad
\hat\nu_\ell(x)=\bigl(U_{\ell+1}^\top \hat\nu_{\ell+1}(x)\bigr)\odot \mathbf{1}_{\{a_\ell(x)>0\}},
\quad \ell=L-1,\dots,1.
\]
By construction, $\hat\nu(x)\in\mathcal K_\nu$. Moreover, for each coordinate $(\ell,i)$,
\[
\hat\nu_{\ell,i}(x)=0 \quad \text{if } a_{\ell,i}(x)<0,
\qquad
\hat\nu_{\ell,i}(x)=\operatorname{ub}_{\ell,i}(\hat\nu(x)) \quad \text{if } a_{\ell,i}(x)>0.
\]
At coordinates with $a_{\ell,i}(x)=0$, the canonical choice sets $\hat\nu_{\ell,i}(x)=0$, which is still feasible. By Lemma~\ref{lem:app_sdr_relu_optset}, $\hat\nu(x)$ is therefore ReLU-optimal.

\paragraph{Step 4: canonical optimal branch and nonuniqueness.}
Combining the blockwise conclusions gives
\[
\hat\xi(x):=(\hat\nu(x),\hat p(x),\hat r(x))\in \mathcal D^\star(x),
\]
which proves \eqref{eq:sdr_canonical_xi}. Finally, the quadratic block is always single-valued, while Lemma~\ref{lem:app_sdr_relu_optset} shows that the ReLU block can fail to be unique only at coordinates where $a_{\ell,i}(x)=0$, and the conic block can fail to be unique only when $u_g(x)=0$. Hence the only possible sources of nonuniqueness in $\mathcal D^\star(x)$ are zero ReLU preactivations and zero conic residuals.
\end{proof}

\subsubsection{Proof of Corollary~\ref{cor:sdr_min_norm_selector}}
\begin{proof}
By Proposition~\ref{prop:sdr_blockwise_selector}, the optimal-dual set factorizes as
\[
\mathcal D^\star(x)
=
\mathcal D^\star_{\mathrm{ReLU}}(x)
\times
\prod_{h=1}^H \{\hat p_h(x)\}
\times
\prod_{g=1}^G \mathcal R_g^\star(x).
\]
Hence the minimum-norm problem over $\mathcal D^\star(x)$ separates blockwise.

For the quadratic block, the optimizer is already unique, so it is trivially the minimum-norm element.

For the conic block, if $u_g(x)\neq 0$, then $\mathcal R_g^\star(x)$ is a singleton and there is nothing to prove. If $u_g(x)=0$, then
\[
\mathcal R_g^\star(x)=\{r_g\in\mathbb R^{k_g}:\|r_g\|_2\le \lambda_g\},
\]
whose unique minimum-norm element is exactly $r_g=0$. This matches the canonical choice in \eqref{eq:sdr_conic_canonical}.

For the ReLU block, we prove that the canonical selector $\hat\nu(x)$ is in fact the elementwise minimum of the entire set $\mathcal D^\star_{\mathrm{ReLU}}(x)$. Let $\nu \in \mathcal D^\star_{\mathrm{ReLU}}(x)$ be any valid optimal multiplier. By Lemma~\ref{lem:app_sdr_relu_optset}, all components of $\nu$ are non-negative. 

At the last layer $L$, for any degenerate coordinate ($a_{L,i}(x)=0$), we have $\nu_{L,i} \in [0, c_i]$, whereas the canonical choice assigns $\hat\nu_{L,i}(x) = 0$. For non-degenerate coordinates, $\hat\nu_{L,i}(x)$ exactly equals $\nu_{L,i}$. Thus, $\hat\nu_L(x) \le \nu_L$ elementwise.

We proceed by backward induction. Suppose $\hat\nu_{\ell+1}(x) \le \nu_{\ell+1}$ elementwise. Since the network parameters satisfy $U_{\ell+1} \ge 0$, we have
\[
U_{\ell+1}^\top \hat\nu_{\ell+1}(x) \le U_{\ell+1}^\top \nu_{\ell+1}.
\]
For any coordinate $i$ at layer $\ell$:
\begin{enumerate}[label=(\roman*), leftmargin=*]
    \item If $a_{\ell,i}(x) < 0$, then $\hat\nu_{\ell,i}(x) = \nu_{\ell,i} = 0$.
    \item If $a_{\ell,i}(x) = 0$, then $\nu_{\ell,i} \ge 0$ while the canonical choice forces $\hat\nu_{\ell,i}(x) = 0$, so $\hat\nu_{\ell,i}(x) \le \nu_{\ell,i}$.
    \item If $a_{\ell,i}(x) > 0$, Lemma~\ref{lem:app_sdr_relu_optset} requires the multiplier to hit its upper bound. Thus,
    \[
    \hat\nu_{\ell,i}(x) = (U_{\ell+1}^\top \hat\nu_{\ell+1}(x))_i \le (U_{\ell+1}^\top \nu_{\ell+1})_i = \nu_{\ell,i}.
    \]
\end{enumerate}
Therefore, $\hat\nu_\ell(x) \le \nu_\ell$ elementwise for all $\ell=1,\dots,L$. Since every feasible $\nu\in\mathcal D^\star_{\mathrm{ReLU}}(x)$ is nonnegative and satisfies
\[
\hat\nu_\ell(x)\le \nu_\ell \quad \text{elementwise for all } \ell,
\]
we have
\[
\|\hat\nu(x)\|_2 \le \|\nu\|_2.
\]
Moreover, equality can hold only if $\nu=\hat\nu(x)$ elementwise. Hence $\hat\nu(x)$ is the unique minimum-norm element of $\mathcal D^\star_{\mathrm{ReLU}}(x)$.

Combining the three blocks establishes that $\hat\xi(x)$ is the unique minimum-norm element of $\mathcal D^\star(x)$.
\end{proof}

\subsection{Proofs for Section~\ref{sec:exact_first_order_geometry}}
\label{app:sec:efg}

\subsubsection*{Proof of Theorem~\ref{thm:efg_exact_readout}}

\begin{proof}
For fixed $x$, define
\[
S(x):=\{\mathcal G(\xi):\xi\in\mathcal D^\star(x)\}.
\]
We first note that $S(x)$ is nonempty, compact, and convex. Indeed, $\mathcal K_\nu$ and $\mathcal K_r$ are compact, the quadratic block has the explicit unique optimizer $\hat p_h(x)=\alpha_h(B_hx+e_h)$, and the objective is continuous; hence $\mathcal D^\star(x)$ is nonempty, compact, and convex. Since $\mathcal G$ is linear, the same is true of $S(x)$.

We prove
\[
\partial f_{\mathrm{SOC}}(x)=S(x)
\]
and
\[
f'_{\mathrm{SOC}}(x;d)=\max_{\xi\in\mathcal D^\star(x)}\mathcal G(\xi)^\top d.
\]

\paragraph{Step 1: every readout slope is a subgradient.}
Take any $\xi\in\mathcal D^\star(x)$. By the support property \eqref{eq:sdr_support_property},
\[
f_{\mathrm{SOC}}(x)=\Psi(x;\xi),
\qquad
f_{\mathrm{SOC}}(x')\ge \Psi(x';\xi)
\quad \forall x'\in\mathbb R^{d_0}.
\]
Subtracting gives
\[
f_{\mathrm{SOC}}(x')-f_{\mathrm{SOC}}(x)
\ge
\Psi(x';\xi)-\Psi(x;\xi).
\]
Since $\Psi(\cdot;\xi)$ is affine in $x$ with slope $\mathcal G(\xi)$,
\[
f_{\mathrm{SOC}}(x')-f_{\mathrm{SOC}}(x)
\ge
\mathcal G(\xi)^\top(x'-x).
\]
Hence $\mathcal G(\xi)\in\partial f_{\mathrm{SOC}}(x)$, so
\[
S(x)\subseteq \partial f_{\mathrm{SOC}}(x).
\]

\paragraph{Step 2: directional derivative formula.}
Fix any direction $d\in\mathbb R^{d_0}$ and any sequence $t_n\downarrow 0$. For each $n$, choose
\[
\xi^{(n)}=(\nu^{(n)},p^{(n)},r^{(n)})\in\mathcal D^\star(x+t_n d).
\]
Because $\nu^{(n)}\in\mathcal K_\nu$ and $r^{(n)}\in\mathcal K_r$, these sequences lie in compact sets. Moreover,
\[
p_h^{(n)}=\alpha_h(B_h(x+t_n d)+e_h),
\]
so $\{p^{(n)}\}$ is bounded as well. Passing to a subsequence if necessary, we may assume
\[
\xi^{(n)}\to \bar\xi=(\bar\nu,\bar p,\bar r).
\]
Closedness of $\mathcal K_\nu$ and $\mathcal K_r$, together with continuity of $\Psi$, shows that $\bar\xi\in\mathcal D^\star(x)$.

Since $\xi^{(n)}$ is optimal at $x+t_n d$,
\[
f_{\mathrm{SOC}}(x+t_n d)=\Psi(x+t_n d;\xi^{(n)}).
\]
By feasibility of $\xi^{(n)}$ at $x$,
\[
f_{\mathrm{SOC}}(x)\ge \Psi(x;\xi^{(n)}).
\]
Subtracting and dividing by $t_n>0$ gives
\[
\frac{f_{\mathrm{SOC}}(x+t_n d)-f_{\mathrm{SOC}}(x)}{t_n}
\le
\frac{\Psi(x+t_n d;\xi^{(n)})-\Psi(x;\xi^{(n)})}{t_n}
=
\mathcal G(\xi^{(n)})^\top d.
\]
Passing to the limit yields
\[
f'_{\mathrm{SOC}}(x;d)\le \mathcal G(\bar\xi)^\top d\le \max_{g\in S(x)} g^\top d.
\]
On the other hand, Step 1 shows that every $g\in S(x)$ is a subgradient, so
\[
f'_{\mathrm{SOC}}(x;d)\ge g^\top d
\qquad \forall g\in S(x).
\]
Taking the maximum over $g\in S(x)$,
\[
f'_{\mathrm{SOC}}(x;d)\ge \max_{g\in S(x)} g^\top d.
\]
Therefore
\[
f'_{\mathrm{SOC}}(x;d)=\max_{g\in S(x)} g^\top d
=
\max_{\xi\in\mathcal D^\star(x)} \mathcal G(\xi)^\top d.
\]
This proves \eqref{eq:efg_directional_derivative}.

\paragraph{Step 3: identify the full subdifferential.}
Since $f_{\mathrm{SOC}}$ is finite and convex,
\[
f'_{\mathrm{SOC}}(x;d)=\max_{g\in\partial f_{\mathrm{SOC}}(x)} g^\top d
\qquad \forall d\in\mathbb R^{d_0}.
\]
Combining this with the directional-derivative formula just proved yields
\[
\max_{g\in\partial f_{\mathrm{SOC}}(x)} g^\top d
=
\max_{g\in S(x)} g^\top d
\qquad \forall d\in\mathbb R^{d_0}.
\]
Thus the nonempty compact convex sets $\partial f_{\mathrm{SOC}}(x)$ and $S(x)$ have identical support functions, and therefore coincide. Hence
\[
\partial f_{\mathrm{SOC}}(x)
=
\{\mathcal G(\xi):\xi\in\mathcal D^\star(x)\},
\]
which is \eqref{eq:efg_exact_subdiff}.
\end{proof}

\subsubsection*{Derivation of \eqref{eq:efg_subdiff_decomposition}}

By Theorem~\ref{thm:efg_exact_readout},
\[
\partial f_{\mathrm{SOC}}(x)
=
\{\mathcal G(\xi):\xi\in\mathcal D^\star(x)\}.
\]
By Proposition~\ref{prop:sdr_blockwise_selector},
\[
\mathcal D^\star(x)
=
\mathcal D^\star_{\mathrm{ReLU}}(x)
\times
\prod_{h=1}^H \{\hat p_h(x)\}
\times
\prod_{g=1}^G \mathcal R_g^\star(x).
\]
Substituting this factorization into the linear readout map $\mathcal G$ gives
\[
\partial f_{\mathrm{SOC}}(x)
=
v
+\left\{
\sum_{\ell=1}^L W_\ell^\top \nu_\ell:\nu\in\mathcal D^\star_{\mathrm{ReLU}}(x)
\right\}
+\sum_{h=1}^H B_h^\top \hat p_h(x)
+\sum_{g=1}^G A_g^\top \mathcal R_g^\star(x).
\]
Using $\hat p_h(x)=\alpha_h(B_hx+e_h)$ and the definition of $\mathcal S_{\mathrm{ReLU}}(x)$ in \eqref{eq:efg_relu_readout_set} yields \eqref{eq:efg_subdiff_decomposition}.

\subsubsection*{Proof of Proposition~\ref{prop:efg_nondegenerate_readout}}

\begin{proof}
Assume $x$ is nondegenerate, i.e.,
\[
a_{\ell,i}(x)\neq 0 \quad \text{for all }(\ell,i),
\qquad
u_g(x)\neq 0 \quad \text{for all } g.
\]
For the ReLU block, Lemma~\ref{lem:app_sdr_relu_optset} shows that every coordinate is forced either to
\[
\nu_{\ell,i}=0
\quad\text{if } a_{\ell,i}(x)<0,
\]
or to
\[
\nu_{\ell,i}=\operatorname{ub}_{\ell,i}(\nu)
\quad\text{if } a_{\ell,i}(x)>0.
\]
Since no coordinate satisfies $a_{\ell,i}(x)=0$, the ReLU optimal set is a singleton, and its unique element is exactly the backward recursion $\hat\nu(x)$ in \eqref{eq:sdr_relu_canonical}.

The quadratic block is always unique, with optimizer
\[
\hat p_h(x)=\alpha_h(B_hx+e_h).
\]
The conic block is unique because $u_g(x)\neq 0$ for every $g$, so
\[
\mathcal R_g^\star(x)=
\left\{\lambda_g \frac{u_g(x)}{\|u_g(x)\|_2}\right\}.
\]
Hence Proposition~\ref{prop:sdr_blockwise_selector} yields
\[
\mathcal D^\star(x)=\{\hat\xi(x)\}.
\]
Applying Theorem~\ref{thm:efg_exact_readout},
\[
\partial f_{\mathrm{SOC}}(x)=\{\mathcal G(\hat\xi(x))\}.
\]
A finite convex function is differentiable at a point if and only if its subdifferential there is a singleton. Therefore $f_{\mathrm{SOC}}$ is differentiable at $x$, and
\[
\nabla f_{\mathrm{SOC}}(x)=\mathcal G(\hat\xi(x)).
\]
Substituting the explicit forms of the blockwise selectors gives \eqref{eq:efg_canonical_gradient}.

Finally, the last statement follows immediately from Proposition~\ref{prop:sdr_blockwise_selector}: the quadratic block never creates nonuniqueness, so any possible failure of differentiability can only occur when some ReLU preactivation vanishes or some conic residual is zero.
\end{proof}

\subsection{Proofs for Section~\ref{sec:local_curvature_transition}}
\label{app:sec:lct}

\subsubsection*{Proof of Proposition~\ref{prop:lct_setvalued_stability}}

\begin{proof}
We first prove the statement for the optimal-dual map
\[
x\mapsto \mathcal D^\star(x),
\]
and then transfer it to the subdifferential map through Theorem~\ref{thm:efg_exact_readout}.

\paragraph{Step 1: nonempty compact values.}
As noted in the proof of Theorem~\ref{thm:efg_exact_readout}, $\mathcal D^\star(x)$ is nonempty and compact for every $x$, so the map $x\mapsto \mathcal D^\star(x)$ has nonempty compact values.

\paragraph{Step 2: closed graph.}
Take any sequence $x_n\to x$ and any sequence
\[
\xi^{(n)}=(\nu^{(n)},p^{(n)},r^{(n)})\in \mathcal D^\star(x_n)
\]
such that
\[
\xi^{(n)}\to \bar\xi=(\bar\nu,\bar p,\bar r).
\]
Since $\mathcal K_\nu$ and $\mathcal K_r$ are closed, $\bar\nu\in\mathcal K_\nu$ and $\bar r\in\mathcal K_r$; the quadratic block is unconstrained, so $\bar\xi$ is feasible at $x$. For any feasible triple $(\nu,p,r)$,
\[
\Psi(x_n;\xi^{(n)})\ge \Psi(x_n;\nu,p,r)
\qquad \forall n,
\]
because $\xi^{(n)}$ is optimal at $x_n$. Passing to the limit and using continuity of $\Psi$ yields
\[
\Psi(x;\bar\xi)\ge \Psi(x;\nu,p,r),
\]
so $\bar\xi\in\mathcal D^\star(x)$. Thus the graph of $x\mapsto\mathcal D^\star(x)$ is closed.

\paragraph{Step 3: outer semicontinuity.}
In finite dimensions, a set-valued map with nonempty compact values and closed graph is outer semicontinuous. Hence $x\mapsto \mathcal D^\star(x)$ is outer semicontinuous.

\paragraph{Step 4: transfer to the subdifferential map.}
By Theorem~\ref{thm:efg_exact_readout},
\[
\partial f_{\mathrm{SOC}}(x)=\{\mathcal G(\xi):\xi\in\mathcal D^\star(x)\}.
\]
Since $\mathcal G$ is continuous and linear, the image of a nonempty compact set is nonempty and compact. To prove closed graph, let $x_n\to x$ and $g_n\to g$ with
\[
g_n\in \partial f_{\mathrm{SOC}}(x_n).
\]
Choose $\xi^{(n)}\in\mathcal D^\star(x_n)$ such that
\[
g_n=\mathcal G(\xi^{(n)}).
\]
As above, $\{\xi^{(n)}\}$ has a convergent subsequence, say $\xi^{(n_k)}\to \bar\xi$, and the closed-graph property of $x\mapsto\mathcal D^\star(x)$ gives $\bar\xi\in \mathcal D^\star(x)$. By continuity of $\mathcal G$,
\[
g=\mathcal G(\bar\xi)\in \partial f_{\mathrm{SOC}}(x).
\]
Thus the graph of $x\mapsto \partial f_{\mathrm{SOC}}(x)$ is closed. Since its values are nonempty and compact, it is also outer semicontinuous.
\end{proof}

\subsubsection*{Proof of Theorem~\ref{thm:lct_local_branch}}

\begin{proof}
Let $\bar x$ be nondegenerate in the sense of \eqref{eq:efg_nondegenerate}. Thus
\[
a_{\ell,i}(\bar x)\neq 0
\quad \text{for all ReLU coordinates }(\ell,i),
\qquad
u_g(\bar x)\neq 0
\quad \text{for all } g=1,\dots,G.
\]

\paragraph{Step 1: local constancy of the ReLU branch.}
Because each preactivation map
\[
a_\ell(x)=W_\ell x+U_\ell z_{\ell-1}(x)+b_\ell
\]
is continuous in $x$, and each coordinate $a_{\ell,i}(\bar x)$ is nonzero, for every $(\ell,i)$ there exists $\rho_{\ell,i}>0$ such that
\[
\operatorname{sign}(a_{\ell,i}(x))
=
\operatorname{sign}(a_{\ell,i}(\bar x))
\qquad
\text{whenever } \|x-\bar x\|_2<\rho_{\ell,i}.
\]
Let
\[
\rho_1:=\min_{\ell,i}\rho_{\ell,i}>0.
\]
Then the full ReLU sign pattern is fixed on $\mathbb B(\bar x,\rho_1)$.

Define the diagonal masks
\[
D_\ell:=\operatorname{Diag}\bigl(\mathbf 1_{\{a_\ell(\bar x)>0\}}\bigr),
\qquad \ell=1,\dots,L.
\]
On $\mathbb B(\bar x,\rho_1)$,
\[
z_\ell(x)=D_\ell\bigl(W_\ell x+U_\ell z_{\ell-1}(x)+b_\ell\bigr).
\]
An induction on $\ell$ shows that each $z_\ell(x)$ is affine on this neighborhood. Hence
\[
f_{\mathrm{ReLU}}(x)=c^\top z_L(x)+v^\top x+b_0=\bar a^\top x+\bar b
\]
for some constants $\bar a\in\mathbb R^{d_0}$ and $\bar b\in\mathbb R$.

\paragraph{Step 2: local nonvanishing of the conic branch.}
Each residual map
\[
u_g(x)=A_gx+d_g
\]
is affine, hence continuous. Since $u_g(\bar x)\neq 0$, there exists $\rho_g>0$ such that
\[
u_g(x)\neq 0
\qquad
\text{whenever } \|x-\bar x\|_2<\rho_g.
\]
Let
\[
\rho_2:=\min_{g=1,\dots,G}\rho_g>0
\]
and define
\[
\rho:=\min\{\rho_1,\rho_2\}>0.
\]
Then for every $x\in\mathbb B(\bar x,\rho)$, the ReLU sign pattern is fixed and all conic residuals remain nonzero. In particular, every such $x$ is nondegenerate.

\paragraph{Step 3: branch uniqueness.}
Since every $x\in\mathbb B(\bar x,\rho)$ is nondegenerate, Proposition~\ref{prop:efg_nondegenerate_readout} implies that $\mathcal D^\star(x)$ is a singleton. Equivalently, the canonical selector $\hat\xi(x)$ is the unique optimal dual branch on this neighborhood.

\paragraph{Step 4: local decomposition.}
For every $x\in\mathbb B(\bar x,\rho)$,
\[
f_{\mathrm{ReLU}}(x)=\bar a^\top x+\bar b.
\]
Substituting this into the definition of $f_{\mathrm{SOC}}$ gives
\[
f_{\mathrm{SOC}}(x)
=
\bar a^\top x+\bar b
+
\sum_{h=1}^H \frac{\alpha_h}{2}\|B_hx+e_h\|_2^2
+
\sum_{g=1}^G \lambda_g\|A_gx+d_g\|_2,
\]
which is exactly \eqref{eq:lct_local_decomposition}.

\paragraph{Step 5: gradient formula and exact readout.}
The local decomposition shows that $f_{\mathrm{SOC}}$ is continuously differentiable on $\mathbb B(\bar x,\rho)$. Differentiating term by term gives
\[
\nabla f_{\mathrm{SOC}}(x)
=
\bar a
+
\sum_{h=1}^H \alpha_h B_h^\top(B_hx+e_h)
+
\sum_{g=1}^G \lambda_g A_g^\top \hat u_g(x),
\qquad
\hat u_g(x):=\frac{u_g(x)}{\|u_g(x)\|_2},
\]
which is \eqref{eq:lct_local_gradient}. Since $\mathcal D^\star(x)=\{\hat\xi(x)\}$ on this neighborhood, Proposition~\ref{prop:efg_nondegenerate_readout} yields
\[
\nabla f_{\mathrm{SOC}}(x)=\mathcal G(\hat\xi(x)).
\]

\paragraph{Step 6: Hessian formula.}
Each quadratic term contributes
\[
\nabla^2\!\left(\frac{\alpha_h}{2}\|B_hx+e_h\|_2^2\right)
=
\alpha_h B_h^\top B_h.
\]
For each conic term, let
\[
u:=u_g(x)=A_gx+d_g,
\qquad
\hat u:=\frac{u}{\|u\|_2}.
\]
Since $u\neq 0$, the Jacobian of the normalized map is
\[
D\hat u=\frac{1}{\|u\|_2}(I-\hat u\hat u^\top).
\]
Applying the chain rule gives
\[
\nabla^2\bigl(\lambda_g\|A_gx+d_g\|_2\bigr)
=
\lambda_g A_g^\top
\left[
\frac{1}{\|u_g(x)\|_2}
\bigl(I-\hat u_g(x)\hat u_g(x)^\top\bigr)
\right]
A_g.
\]
Summing over all blocks yields \eqref{eq:lct_local_hessian}.

\paragraph{Step 7: positive semidefiniteness.}
Each matrix $\alpha_h B_h^\top B_h$ is positive semidefinite. For the conic block,
\[
I-\hat u_g(x)\hat u_g(x)^\top
\]
is the orthogonal projector onto the subspace orthogonal to $\hat u_g(x)$, hence positive semidefinite. Multiplication by the positive scalar $1/\|u_g(x)\|_2$ and congruence by $A_g$ preserve positive semidefiniteness. Therefore every term in \eqref{eq:lct_local_hessian} is positive semidefinite, and so
\[
\nabla^2 f_{\mathrm{SOC}}(x)\succeq 0.
\]
This completes the proof.
\end{proof}

\section{Additional Numerical Results and White-Box Inference Tutorial}
\label{app:exp_details}

This appendix reports detailed numerical results for Experiments~1--4. 
Tables~\ref{tab:app_exp1_exact_readout}--\ref{tab:app_exp3_degeneracy} provide the full results and metric definitions for the diagnostic experiments in the main text. 
Appendix~\ref{app:whitebox_tutorial} gives a step-by-step tutorial for using the dual readout mechanism in a downstream inference loop.

\paragraph{Experimental configuration.}
All experiments were implemented in Python~3.11 with PyTorch~2.0 and executed in double precision. 
The experiments were run on a workstation equipped with an NVIDIA RTX~4060~Ti GPU; the provided scripts use Intel I5-13490F CPU execution by default unless the device option is changed to CUDA. 
For Experiment~1, we use a randomly initialized SOC-ICNN with input dimension \(d_0=20\), width \(64\), depth \(4\), two quadratic modules, two norm modules, quadratic and norm dimensions \(20\), seed \(0\), and \(250\) sampled inputs. 
For Experiment~2, we use \(d_0=10\), width \(32\), depth \(3\), two quadratic modules, two norm modules, quadratic and norm dimensions \(10\), seed \(0\), and \(100\) retained nondegenerate samples; the local quadratic approximation is tested with \(500\) perturbations at radii \(10^{-4}\), \(3\times10^{-4}\), and \(10^{-3}\). 
For Experiment~3, we use a hand-crafted two-dimensional SOC-ICNN with one zero ReLU preactivation and one zero conic residual at \(x_0\), and evaluate \(1000\) random directions, \(5000\) sampled dual branches, and \(5000\) support-test points with finite-difference step \(10^{-7}\). 
For Experiment~4, we use \(d=10\), depth \(3\), hidden width \(32\), one quadratic module of dimension \(8\), two norm modules of dimension \(8\), \(\beta=10\), and \(30\) randomly sampled query vectors.

\subsection{Additional Results for Experiments 1--3}
\label{app:exp13_details}

\paragraph{Experiment 1: exact first-order readout.}
Experiment~1 compares the canonical dual readout \(g_{\mathrm{readout}}(x):=\mathcal G(\hat\xi(x))\) with the autodiff gradient \(g_{\mathrm{autodiff}}(x):=\nabla_x f_{\mathrm{SOC}}(x)\) on nondegenerate inputs.

\begin{table}[ht]
\centering
\caption{Experiment 1: exact first-order readout on nondegenerate inputs.}
\label{tab:app_exp1_exact_readout}
\small
\setlength{\tabcolsep}{7pt}
\begin{tabular}{lccccc}
\toprule
Trials & Retained Rate & Grad \(L_2\) Err & Grad Rel. Err & Cosine Sim. & Runtime (ms) \\
\midrule
250 & 1.0000 & \(4.75\times 10^{-15}\) & \(1.37\times 10^{-16}\) & 1.000000000000 & 10.36 \\
\bottomrule
\end{tabular}
\vspace{1mm}

\begin{minipage}{0.96\linewidth}
\footnotesize
\emph{Notes.} 
Retained Rate is the fraction of sampled inputs satisfying the nondegeneracy condition in \eqref{eq:efg_nondegenerate}. 
Grad \(L_2\) Err is the mean \(\|g_{\mathrm{readout}}-g_{\mathrm{autodiff}}\|_2\). 
Grad Rel. Err is the corresponding relative error normalized by \(\|g_{\mathrm{autodiff}}\|_2\). 
Cosine Sim. is the cosine similarity between the readout and autodiff gradients. 
Runtime reports the readout-evaluation time in milliseconds.
\end{minipage}
\end{table}

\paragraph{Experiment 2: local affine--curvature branch.}
Experiment~2 compares the explicit local gradient and Hessian formulas from Theorem~\ref{thm:lct_local_branch} with automatic differentiation. 
It also tests the local quadratic approximation around a nondegenerate anchor point.

\begin{table}[ht]
\centering
\caption{Experiment 2: verification of the local gradient and Hessian formulas in Theorem~\ref{thm:lct_local_branch}.}
\label{tab:app_exp2_local_formula}
\small
\setlength{\tabcolsep}{5pt}
\begin{tabular}{lcccccc}
\toprule
Trials & Grad \(L_2\) Err & Grad Rel. Err & Hess Fro. Err & Hess Rel. Err & MinEig (Formula) & MinEig (Autodiff) \\
\midrule
100
& \(2.76\times 10^{-15}\)
& \(1.21\times 10^{-16}\)
& \(6.72\times 10^{-16}\)
& \(1.57\times 10^{-16}\)
& \(3.32\times 10^{-1}\)
& \(3.32\times 10^{-1}\) \\
\bottomrule
\end{tabular}
\vspace{1mm}

\begin{minipage}{0.96\linewidth}
\footnotesize
\emph{Notes.} 
Grad \(L_2\) Err and Grad Rel. Err compare the explicit gradient readout with the autodiff gradient. 
Hess Fro. Err is the mean Frobenius error \(\|H_{\mathrm{readout}}-H_{\mathrm{autodiff}}\|_F\). 
Hess Rel. Err is the corresponding relative Frobenius error. 
MinEig reports the minimum eigenvalue computed from the corresponding Hessian formula and from autodiff.
\end{minipage}
\end{table}

\begin{table}[ht]
\centering
\caption{Experiment 2: local quadratic approximation around a nondegenerate anchor point.}
\label{tab:app_exp2_local_quad}
\small
\setlength{\tabcolsep}{10pt}
\begin{tabular}{lcc}
\toprule
Radius & Retained Rate & Quadratic Approx. Error \\
\midrule
\(10^{-4}\)          & 1.000 & \(1.34\times 10^{-14}\) \\
\(3\times 10^{-4}\)  & 1.000 & \(2.92\times 10^{-13}\) \\
\(10^{-3}\)          & 1.000 & \(1.18\times 10^{-11}\) \\
\bottomrule
\end{tabular}
\vspace{1mm}

\begin{minipage}{0.92\linewidth}
\footnotesize
\emph{Notes.} 
Radius is \(\|\delta\|_2\). 
Retained Rate is the fraction of perturbations that remain in the same local nondegenerate branch as the anchor point. 
Quadratic Approx. Error is the mean absolute residual of the second-order Taylor approximation.
\end{minipage}
\end{table}

\paragraph{Experiment 3: degenerate first-order geometry.}
Experiment~3 evaluates the set-valued first-order geometry at a degenerate input \(x_0\) with one zero ReLU preactivation and one zero conic residual. 
The directional derivative is compared against the exact max-readout formula in \eqref{eq:efg_directional_derivative}.

\begin{table}[ht]
\centering
\caption{Experiment 3: degenerate first-order geometry at a point with one zero ReLU preactivation and one zero conic residual.}
\label{tab:app_exp3_degeneracy}
\small
\setlength{\tabcolsep}{5pt}
\begin{tabular}{lccccc}
\toprule
Directions 
& FD Mean Err 
& FD Max Err 
& Canonical Gap Frac. 
& Max Violation 
& Min Support Margin \\
\midrule
1000
& \(1.17\times 10^{-8}\)
& \(1.53\times 10^{-8}\)
& 1.000
& 0.00
& \(9.59\times 10^{-2}\) \\
\bottomrule
\end{tabular}
\vspace{1mm}

\begin{minipage}{0.96\linewidth}
\footnotesize
\emph{Notes.} 
FD Mean Err and FD Max Err are the mean and maximum absolute errors between the one-sided finite-difference directional derivative and the exact max-readout formula in \eqref{eq:efg_directional_derivative}. 
Canonical Gap Frac. is the fraction of sampled directions for which the canonical readout \(g^{\mathrm{can}}(x_0)^\top d\) is strictly below the maximizing readout \(\max_{\xi\in\mathcal D^\star(x_0)}\mathcal G(\xi)^\top d\). 
Max Violation is the largest amount by which any sampled dual readout exceeds the exact max-readout value. 
Min Support Margin is the minimum value of \(f_{\mathrm{SOC}}(y)-f_{\mathrm{SOC}}(x_0)-g^{\mathrm{can}}(x_0)^\top(y-x_0)\) over sampled test points \(y\).
\end{minipage}
\end{table}

\subsection{White-Box Inference Tutorial}
\label{app:whitebox_tutorial}

This section gives a tutorial for using the dual readout mechanism in a downstream inference problem. 
The purpose is to show how the theoretical objects developed in the main text can be assembled into a complete white-box inference loop.

\paragraph{Inference problem.}
Given a query vector \(y\in\mathbb R^d\), we solve
\begin{equation}
\label{eq:app_whitebox_objective}
\min_{x\in\mathbb R^d}
F_y(x)
:=
f_{\mathrm{SOC}}(x)
+
\frac{\beta}{2}\|x-y\|_2^2,
\end{equation}
where \(\beta>0\) makes the downstream problem strongly convex. 
The SOC-ICNN has the form
\begin{equation}
\label{eq:app_whitebox_soc_icnn}
f_{\mathrm{SOC}}(x)
=
f_{\mathrm{ReLU}}(x)
+
\sum_{h=1}^H
\frac{\alpha_h}{2}\|B_hx+e_h\|_2^2
+
\sum_{g=1}^G
\lambda_g\|A_gx+d_g\|_2.
\end{equation}
The experiment uses \(d=10\), \(L=3\), hidden width \(32\), one quadratic module of dimension \(8\), two norm modules of dimension \(8\), and \(\beta=10\). 
We evaluate the methods on \(30\) randomly sampled query vectors.

\paragraph{Step 1: Forward pass and structural quantities.}
At iterate \(x_k\), run a standard forward pass and compute the ReLU preactivations \(a_\ell(x_k)=W_\ell x_k+U_\ell z_{\ell-1}(x_k)+b_\ell\), activations \(z_\ell(x_k)=\max\{a_\ell(x_k),0\}\), quadratic residuals \(q_h(x_k)=B_hx_k+e_h\), and conic residuals \(u_g(x_k)=A_gx_k+d_g\). 
The signs of \(a_\ell(x_k)\) identify the local ReLU branch, while the norms \(\|u_g(x_k)\|_2\) determine whether the norm modules are locally smooth.

\paragraph{Step 2: Read out the ReLU multipliers.}
Define the binary ReLU masks \(M_\ell(x_k):=\mathbf 1_{\{a_\ell(x_k)>0\}}\). 
The canonical ReLU dual branch is obtained by
\begin{equation}
\label{eq:app_whitebox_relu_last}
\hat\nu_L(x_k)=c\odot M_L(x_k),
\end{equation}
and, for \(\ell=L-1,\dots,1\),
\begin{equation}
\label{eq:app_whitebox_relu_backward}
\hat\nu_\ell(x_k)
=
\left(U_{\ell+1}^{\top}\hat\nu_{\ell+1}(x_k)\right)
\odot M_\ell(x_k).
\end{equation}
These quantities are the optimal dual multipliers associated with the LP value-function representation of the ReLU backbone.

\paragraph{Step 3: Read out the quadratic and conic multipliers.}
The quadratic modules have unique multipliers
\begin{equation}
\label{eq:app_whitebox_quad_dual}
\hat p_h(x_k)
=
\alpha_h q_h(x_k)
=
\alpha_h(B_hx_k+e_h).
\end{equation}
The canonical conic multipliers are
\begin{equation}
\label{eq:app_whitebox_conic_dual}
\hat r_g(x_k)
=
\begin{cases}
\lambda_g\dfrac{u_g(x_k)}{\|u_g(x_k)\|_2},
& u_g(x_k)\neq 0,\\[2mm]
0,
& u_g(x_k)=0.
\end{cases}
\end{equation}
Thus one forward pass gives the canonical dual branch \(\hat\xi(x_k)=(\hat\nu(x_k),\hat p(x_k),\hat r(x_k))\).

\paragraph{Step 4: Construct the white-box gradient.}
Using the readout map \(\mathcal G\), the SOC-ICNN gradient representative is
\begin{equation}
\label{eq:app_whitebox_grad_f}
g_{\mathrm{SOC}}(x_k)
=
v
+
\sum_{\ell=1}^L W_\ell^\top\hat\nu_\ell(x_k)
+
\sum_{h=1}^H B_h^\top\hat p_h(x_k)
+
\sum_{g=1}^G A_g^\top\hat r_g(x_k).
\end{equation}
Therefore,
\begin{equation}
\label{eq:app_whitebox_grad_F}
\nabla F_y(x_k)
=
g_{\mathrm{SOC}}(x_k)
+
\beta(x_k-y).
\end{equation}
A white-box first-order method applies \(x_{k+1}=x_k-\eta_k\nabla F_y(x_k)\).

\paragraph{Step 5: Construct the local Hessian.}
At a nondegenerate point, the ReLU branch is locally affine and therefore contributes no curvature. 
With \(\hat u_g(x_k)=u_g(x_k)/\|u_g(x_k)\|_2\), the local Hessian of \(f_{\mathrm{SOC}}\) is
\begin{equation}
\label{eq:app_whitebox_hessian_f}
H_{\mathrm{SOC}}(x_k)
=
\sum_{h=1}^H
\alpha_h B_h^\top B_h
+
\sum_{g=1}^G
\lambda_g A_g^\top
\left[
\frac{1}{\|u_g(x_k)\|_2}
\left(I-\hat u_g(x_k)\hat u_g(x_k)^\top\right)
\right]
A_g.
\end{equation}
The Hessian of the downstream objective is
\begin{equation}
\label{eq:app_whitebox_hessian_F}
\nabla^2F_y(x_k)
=
H_{\mathrm{SOC}}(x_k)
+
\beta I.
\end{equation}

\paragraph{Step 6: Apply white-box Newton inference.}
The second-order white-box update is
\begin{equation}
\label{eq:app_whitebox_newton_direction}
d_k
=
-
\left(
\nabla^2F_y(x_k)+\epsilon I
\right)^{-1}
\nabla F_y(x_k),
\end{equation}
followed by the damped update \(x_{k+1}=x_k+\eta_kd_k\). 
Here \(\epsilon>0\) is a small damping parameter and \(\eta_k\in(0,1]\) is chosen by line search. 
This gives a complete white-box inference loop: the forward pass identifies the active branch, the dual readout constructs the gradient, and the local curvature formula constructs the Hessian.

\paragraph{Compared methods.}
We compare five methods:
\begin{itemize}[leftmargin=*]
    \item \textbf{WhiteBox-GD}: gradient descent using \eqref{eq:app_whitebox_grad_F}.
    \item \textbf{WhiteBox-Newton}: damped Newton using \eqref{eq:app_whitebox_grad_F} and \eqref{eq:app_whitebox_hessian_F}.
    \item \textbf{Torch-GD}: gradient descent using automatic differentiation.
    \item \textbf{Torch-Newton}: damped Newton using automatic differentiation for both gradient and Hessian.
    \item \textbf{Torch-LBFGS}: a black-box quasi-Newton baseline implemented in Torch.
\end{itemize}
The white-box and Torch methods use the same update rules within each order. They differ only in how the derivative information is constructed.

\paragraph{Optimization results.}
Table~\ref{tab:app_exp4_whitebox_inference} reports the optimization performance. 
The gap is computed relative to the best objective value observed among all methods for each query. 
WhiteBox-GD and Torch-GD achieve nearly identical gaps, and WhiteBox-Newton and Torch-Newton also achieve nearly identical gaps. 
This confirms that the white-box readout supplies the same derivative information as automatic differentiation. 
The main difference is computational: WhiteBox-GD is faster than Torch-GD, and WhiteBox-Newton is faster than Torch-Newton because it constructs the Hessian directly from the local curvature formula instead of automatic differentiation. Torch-LBFGS is included as a black-box optimizer baseline rather than as a derivative-construction comparison, since it does not expose the explicit multiplier branch or local Hessian formula.

\begin{table}[ht]
\centering
\caption{White-box inference tutorial: optimization performance. 
The gap is measured against the best objective value observed among all methods for each query.}
\label{tab:app_exp4_whitebox_inference}
\small
\setlength{\tabcolsep}{4pt}
\begin{tabular}{lccccc}
\toprule
Method 
& Gap to Best Obs. 
& Grad. Norm 
& Iter. 
& Time (ms)
& Backtracks \\
\midrule
WhiteBox-GD 
& \(1.2\times 10^{-5}\)
& \(2.01\times 10^{-1}\)
& 458.6
& 3998.5
& 8403.2 \\
WhiteBox-Newton 
& \(8.6\times 10^{-5}\)
& \(1.46\times 10^{-1}\)
& 30.1
& 228.2
& 438.2 \\
Torch-GD
& \(1.2\times 10^{-5}\)
& \(2.01\times 10^{-1}\)
& 484.9
& 4823.5
& 9125.6 \\
Torch-Newton
& \(8.6\times 10^{-5}\)
& \(1.46\times 10^{-1}\)
& 30.1
& 394.0
& 437.3 \\
Torch-LBFGS
& \(1.2\times 10^{-4}\)
& \(1.13\times 10^{-1}\)
& 29.3
& 32.6
& 0.0 \\
\bottomrule
\end{tabular}
\vspace{1mm}

\begin{minipage}{0.96\linewidth}
\footnotesize
\emph{Notes.}
Gap to Best Obs. is the objective gap relative to the best objective value observed among all compared methods for each query. 
Grad. Norm is the final norm of the gradient or selected canonical gradient. 
Iter. is the number of optimization iterations. 
Time is the average runtime in milliseconds. 
Backtracks is the average number of line-search backtracking steps.
\end{minipage}
\end{table}

\paragraph{Readout diagnostics.}
Table~\ref{tab:app_exp4_readout_diagnostic} reports the consistency between the explicit white-box derivatives and the derivatives computed by automatic differentiation. 
The gradient and Hessian errors are near machine precision, confirming that the dual readout reproduces the same first- and second-order information as autodiff. 
The average minimum ReLU margin is small, indicating that some inference trajectories approach ReLU switching boundaries. 
The conic residual norm remains away from zero, so the conic Hessian terms are well defined in these runs.

\begin{table}[ht]
\centering
\caption{White-box inference tutorial: derivative readout and structural diagnostics. 
The readout gradient and Hessian are compared with their automatic-differentiation counterparts at the WhiteBox-Newton solution.}
\label{tab:app_exp4_readout_diagnostic}
\small
\setlength{\tabcolsep}{8pt}
\begin{tabular}{lc}
\toprule
Diagnostic & Value \\
\midrule
Gradient readout error & \(4.44\times 10^{-16}\) \\
Relative gradient readout error & \(1.55\times 10^{-4}\) \\
Hessian readout error & \(3.09\times 10^{-15}\) \\
Relative Hessian readout error & \(9.22\times 10^{-17}\) \\
Mean minimum ReLU margin & \(3.05\times 10^{-3}\) \\
Mean minimum conic residual norm & \(1.54\) \\
\bottomrule
\end{tabular}
\vspace{1mm}

\begin{minipage}{0.86\linewidth}
\footnotesize
\emph{Notes.}
The gradient and Hessian errors compare the explicit white-box derivatives with automatic differentiation. 
The minimum ReLU margin is \(\min_{\ell,i}|a_{\ell,i}(x)|\), and the minimum conic residual norm is \(\min_g\|u_g(x)\|_2\), both measured at the WhiteBox-Newton solution and averaged over queries.
\end{minipage}
\end{table}

\paragraph{Takeaway.}
This tutorial demonstrates that the dual readout is directly usable for downstream inference. 
It recovers the same gradient and Hessian information as automatic differentiation, but exposes the multiplier branch and local curvature structure explicitly. 
Thus, the dual readout provides the ingredients for implementing SOC-ICNN inference as a white-box optimization loop, rather than treating the trained network only through black-box backpropagation.

\end{document}